\documentclass[11pt]{article}
\usepackage{graphicx}

\usepackage[utf8]{inputenc} 
\usepackage[T1]{fontenc}    
\usepackage{hyperref}       
\usepackage{url}            
\usepackage{booktabs}       
\usepackage{amsfonts}       
\usepackage{nicefrac}       
\usepackage{microtype}      
\usepackage{xcolor}         

\usepackage{subfigure}
\usepackage{amsmath,amssymb,amsthm}
\usepackage{pifont} 
\usepackage{dsfont}

\usepackage[preprint]{neurips_2026}

\theoremstyle{plain}
\newtheorem{theorem}{Theorem}
\newtheorem{result}[theorem]{Result}
\newtheorem{proposition}[theorem]{Proposition}
\newtheorem{lemma}[theorem]{Lemma}

\theoremstyle{definition}

\theoremstyle{remark}
\newtheorem{remark}[theorem]{Remark}

\newcommand{\E}{\mathbb{E}}
\newcommand{\R}{\mathbb{R}}

\newcommand{\N}{\mathbb{N}}

\newcommand{\1}{\mathds{1}}
\newcommand{\He}{\mathrm{He}}
\newcommand{\var}{\mathrm{Var}}
\newcommand{\supp}{\mathrm{supp}}
\newcommand{\tr}{\mathrm{Tr}}
\newcommand{\va}{\mathsf{v}}

\newcommand{\iid}{\stackrel{\mathrm{i.i.d.}}{\sim}}
\newcommand{\ra}{\rightarrow}

\newcommand{\floor}[1]{\left \lfloor #1 \right \rfloor}

\newcommand{\El}[1]{\mathbb E \left \langle #1 \right \rangle}

\newcommand{\snr}{\mathrm{snr}}

\newcommand{\bx}{\mathbf x}
\newcommand{\bX}{\mathbf X}
\newcommand{\bw}{\mathbf w}
\newcommand{\bW}{\mathbf W}
\newcommand{\bY}{\mathbf Y}
\newcommand{\bv}{\mathbf v}

\newcommand{\bz}{\mathbf z}

\newcommand{\bs}{\mathbf s}
\newcommand{\bI}{\mathbf I}
\newcommand{\by}{\mathbf y}
\newcommand{\bxi}{\boldsymbol{\xi}}

\title{
Sharp feature-learning transitions and Bayes-optimal neural scaling laws in extensive-width networks
}

\author{
  Minh-Toan Nguyen \\
  ICTP \\
  Trieste, Italy \\
  \texttt{mnguyen@ictp.it}
  \And
  Jean Barbier \\
  ICTP \\
  Trieste, Italy \\
  \texttt{jbarbier@ictp.it}
}

\begin{document}

\maketitle

\begin{abstract}
We study the information-theoretic limits of learning a one-hidden-layer teacher network with hierarchical features from noisy queries, in the context of knowledge transfer to a smaller student model. We work in the high-dimensional regime where the teacher width $k$ scales linearly with the input dimension $d$ -- a setting that captures large-but-finite-width networks and has only recently become analytically tractable. Using a heuristic leave-one-out decoupling argument, validated numerically throughout, we derive asymptotically sharp characterizations of the Bayes-optimal generalization error and individual feature overlaps via a system of closed fixed-point
equations. These equations reveal that feature learnability is governed by a sequence of sharp phase transitions: as data grows, teacher features become recoverable sequentially, each through a discontinuous jump in overlap.
This sequential acquisition underlies a precise notion of \textit{effective width} $k_c$ -- the number of learnable features at a given data budget $n$ -- which unifies two distinct scaling regimes: a feature-learning regime in which the Bayes-optimal generalization error $\varepsilon^{\rm BO}$ scales as $ n^{1/(2\beta)-1}$, and a refinement regime in which it scales as $n^{-1}$, where $\beta>1/2$ is the exponent of the power-law feature hierarchy. Both laws collapse to the single relation $\varepsilon^{\rm BO}=\Theta(k_c d/n)$. We further show empirically that a student trained with \textsc{Adam} near the effective width $k_c$ achieves these optimal scaling laws (up to a small algorithmic gap), and provide an information-theoretic account of the associated scaling in model size.
\end{abstract}

\section{Introduction}

\begin{figure}[!t]
    \centering
	\subfigure{\includegraphics[width=0.32\linewidth]{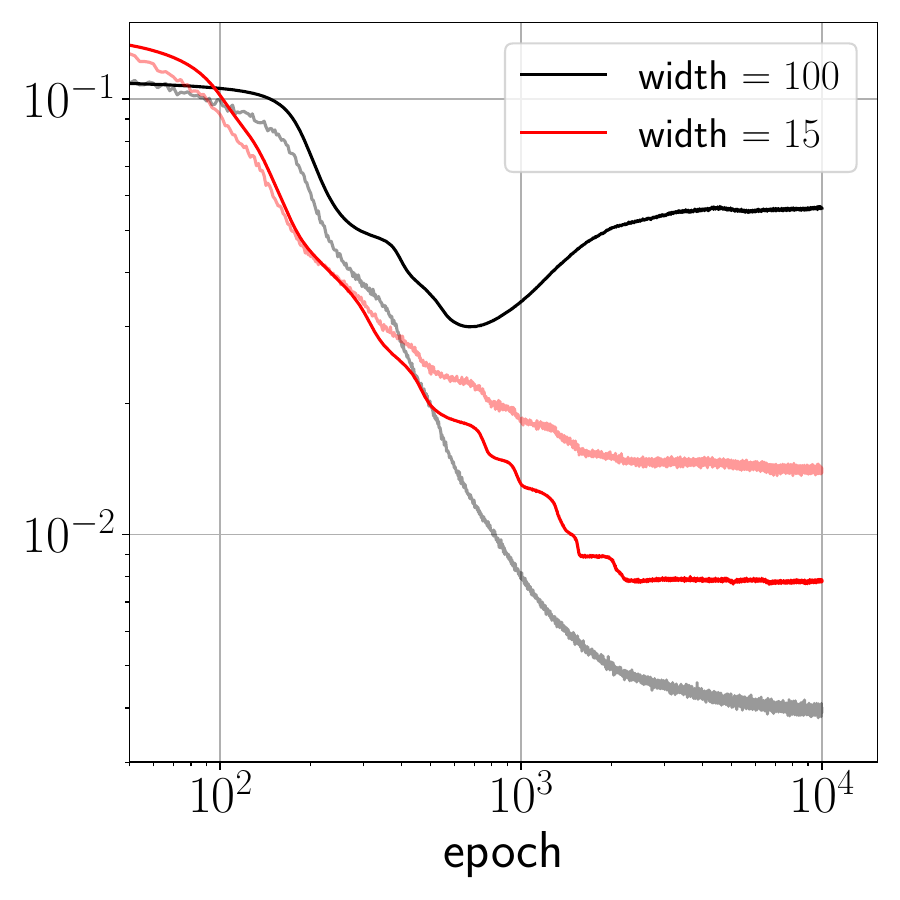}}
    \subfigure{\includegraphics[width=0.32\linewidth]{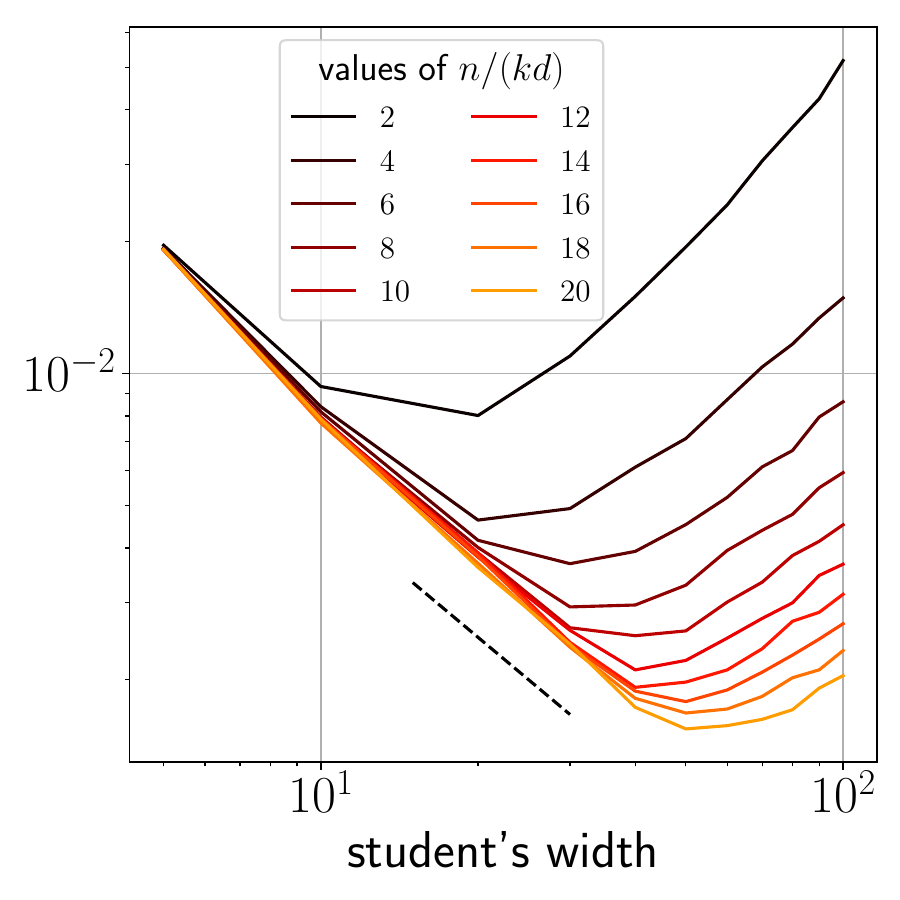}}
    \subfigure{\includegraphics[width=0.32\linewidth]{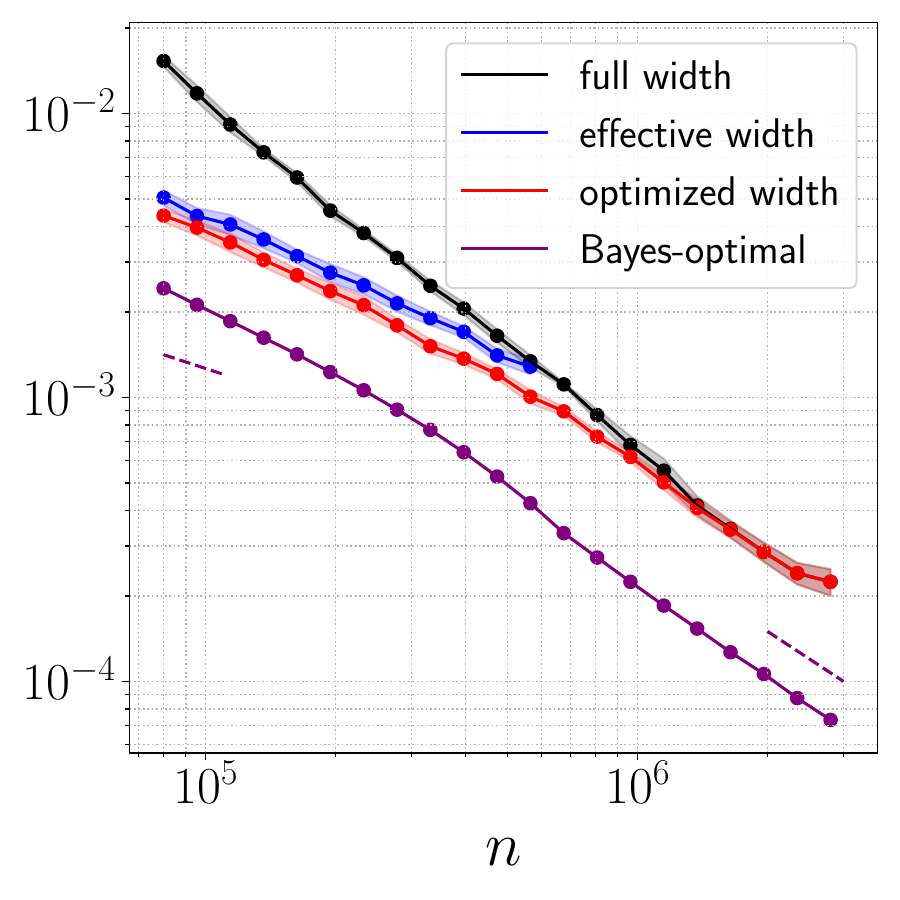}}
     \vspace{-5pt}
    \caption{\textsc{Adam}-trained students (all curves except the purple one, right panel) on a teacher with $\sigma(x)=\tanh(2x)$ (with the $2x$ matching the convention in \cite{barbier2025statistical}), $k=100$, $d=200$, $\Delta=0.01$, power-law readout with $\beta=1$ (3 batches, $10^4$ epochs, learning rate $0.003$; averages over 9 runs, shaded regions show $\pm 1$ standard deviation). \textbf{Left:} At fixed sample rate $n/(kd)=2$, a reduced-width student (solid = test error; faint = training error) outperforms the full-width one. \textbf{Middle:} Generalization error vs.\ student width $k_s$ at several sample rates $n/(kd)$. Performance degrades beyond a critical width; the dashed line indicates the theoretically predicted optimal model-size exponent $1-2\beta$. \textbf{Right:} Generalization error vs.\ data size $n$ for three width choices -- \textit{full-width} ($k_s=k$), \textit{effective width} ($k_s=k_c$ from theory), and \textit{optimized width} (best-performing $k_s \in \{10,20,\ldots,100\}$ found by grid search) -- together with
the Bayes-optimal prediction (purple). Dashed lines indicate the predicted scaling exponents: $1/(2 \beta) - 1$ in the \emph{feature-learning regime} and $-1$ in the \emph{refinement regime}, see Section~\ref{sec:discussion} and Appendix~\ref{app:scaling} for the derivation of exponents.}
	\label{fig:sgd-student}
\end{figure}

The performance of large neural networks often follows simple
power laws in data, compute, and model size~\cite{hestness2017deep,
kaplan2020scaling, hoffmann2022training}. A growing line of theory
attributes this regularity to a hierarchical organization of the
target function: features carry geometrically decaying importance
and a learner acquires them roughly in
order~\cite{hutter2021learning, michaud2023quantization,
arous2025learning, ren2025emergence, defilippis2025scaling,
defilippis2026optimal}. In a teacher-student setting -- the
basic primitive behind data-driven model
distillation~\cite{bucilua2006model, hinton2015distilling,
gou2021knowledge} -- this hierarchy raises an immediate design
question: at a fixed data budget, how should one size the student?

Figure~\ref{fig:sgd-student} exhibits two empirical regularities for
an \textsc{Adam}-trained student. \emph{(i)} There is a critical width beyond which the test error \emph{increases} (middle
panel); below this threshold, the error follows a power law in model
size with exponent $1-2\beta$ for a power-law feature hierarchy of
exponent $\beta>1/2$. \emph{(ii)} At or near the optimal width, the error
obeys two distinct power laws in $n$ (right panel): a slow regime with
exponent $1/(2\beta)-1$ and a faster regime with exponent $-1$. These
are striking but mechanism-agnostic: they could reflect properties of
\textsc{Adam}, or fundamental information-theoretic constraints. To
disentangle the two, we lift all computational constraints and adopt
a Bayes-optimal viewpoint, asking:

\textbf{Q1.} \emph{Given a fixed dataset, which features of the
teacher are statistically recoverable?}

\textbf{Q2.} \emph{How does the recoverable subset of features constrain the
scaling law of the Bayes-optimal generalization error
$\varepsilon^{\rm BO}$ in the data size $n$?}

We address \textbf{Q1} and \textbf{Q2} in a one-hidden-layer teacher--student model in
the \emph{extensive-width} regime $k=\Theta(d)$, with $k$ the teacher
width and $d$ the input dimension. This regime captures the
large-but-finite width of practical networks, lies strictly between
the well-studied multi-index ($k=\Theta(1)$) and NTK ($k\gg d$)
extremes, and is the minimal setting for non-trivial feature learning
beyond the linear data scale ($n\gg d$). Our analytical handle is a
heuristic leave-one-out (cavity) decoupling argument, validated
numerically throughout.

\paragraph{Paper organization.}
Section~\ref{sec:problem} introduces the model and hypotheses.
The main results -- fixed-point equations for the feature overlaps and
Bayes-optimal error, and the ensuing scaling laws -- are stated in
Section~\ref{sec:results} and analyzed in Section~\ref{sec:discussion},
which covers the phase transitions, the error decomposition, the two
scaling regimes and their unification via $k_c$, and the
\textsc{Adam}-trained student.
The leave-one-out derivation is in Section~\ref{sec:derivations}. Section~\ref{sec:conclusion} concludes. Derivations, numerical methods, and additional asymptotic results are
collected in the Appendices. The code for the simulations in this paper is publicly available at \url{https://github.com/Minh-Toan/scaling-laws}.

\subsection{Problem setup}\label{sec:problem}

\paragraph{Notations.} In addition to the standard $O$-notation, we write $A \simeq B$ if $A - B = o(1)$; $A \sim B$ if $A/B \simeq 1$, and with a slight abuse of notation, we also use $\sim$ to indicate a random variable following a probability law; $A \gtrsim B$ means $A \geq c B$ for some absolute constant $c>0$; $A=\Theta(B)$ if $cB \le A \le CB$ for absolute $c,C>0$; $A=\Theta_p(B)$ if $c,C$ may depend on parameter $p$.

\paragraph{Teacher and dataset.} Given inputs $\bx_\mu$ drawn i.i.d.\ from $\mathcal N(0, \bI_d)$ and noise variables $z_\mu$ that are i.i.d.\ standard Gaussian, the dataset $\mathcal{D}=(\bx_\mu,y_\mu^{\rm out})_{\mu\in[n]}$ consists of input-output pairs with noisy responses $y_\mu^{\rm out} = y_\mu + \sqrt{\Delta}\,z_\mu$ and clean responses generated by the teacher neural network:
\begin{align}\label{eq:model}
    y_\mu = \bv^\top \sigma(\bW^0 \bx_\mu).
\end{align}
$\sigma$ is an element-wise activation, $\bv\in\R^k$ is the
readout vector, and $\bW^0\in\R^{k\times d}$ is the teacher's weight matrix.
Each row $\bw_i^0$ of $\bW^0$ defines a latent direction --a
\emph{feature}-- used to project the input.

\paragraph{Students.}
We study two types of learners.
The \emph{Bayesian student} knows the data-generation mechanism
(activation $\sigma$, prior $P_0$, noise variance $\Delta$, and readout $\bv$\footnote{Assuming $\bv$ is known to the Bayesian student is a convenient
simplification. However, in the scaling regime we consider, $k=\Theta(d)$, it does not change the Bayes-optimal (BO) error compared to the case where $\bv$ is unknown as proved in \cite{barbier2025statistical}.})
but not the specific realization $\bW^0$.
For a new input $\bx$, it predicts the corresponding teacher's output by computing the posterior mean $\E[\bv^\top \sigma(\bW \bx) \mid \mathcal{D}]$, which achieves the minimum mean square error, also known as the \emph{Bayes-optimal error} $\varepsilon^{\rm BO}$.  On the other hand, the \emph{SGD student} is a one-hidden-layer network of width $k_s\leq k$ and same input dimension $d$, with weights $(\bW,\bv)$ trained with \textsc{Adam}. The $L_2$ norm of each $\bw_i$ is normalized after every epoch to prevent excessive weight growth and improve performance.

We work throughout under the following hypotheses.

\textit{(H1) High-dimensional extensive-width regime.} We work in the scaling regime
\begin{equation}\label{model:regime}
    d \gg 1, \qquad k = \Theta(d), \qquad n \gg d.
\end{equation}
The condition $k=\Theta(d)$ sits strictly between the multi-index
regime ($k=\Theta(1)$), where expressivity is limited, and the NTK regime
($k\gg d$), where the model is lazy~\cite{jacot2018neural,
chizat2019lazy, ghorbani2021linearized}.
It is the minimal setting for non-trivial feature learning at the
super-linear data scale.
The condition $n\gg d$ is necessary to move beyond the linear-model
regime $n=\Theta(d)$~\cite{cui2021generalization, camilli2025information}.

\textit{(H2) Hierarchical readouts.}
We assume $\|\bv\|^2=1$ and
$v_1\geq v_2 \geq \cdots \geq v_k \geq 0$,
so $v_i$
controls the signal-to-noise ratio for feature $i$: features with
larger $v_i$ are easier to recover.
We consider readouts that are \emph{dense} ($\sqrt{k}\,\bv$
has a limiting density); \emph{power-law} ($v_i\propto i^{-\beta}$);
\emph{exponential} ($v_i\propto e^{-ci}$); and \emph{ultra-sparse}
(all but $\Theta(1)$ entries vanish).
To derive scaling laws we focus on
$\beta>1/2$.

\textit{(H3) Teacher weights.}
$\bW^0$ is drawn uniformly from the Stiefel manifold of $k\times d$
matrices with orthonormal rows ($k\leq d$). The BO error is asymptotically unchanged under i.i.d.\ Gaussian
$\bW^0$; the orthonormal ensemble reduces fluctuations and yields cleaner numerical results.

\textit{(H4) Activation.} $\sigma$ is differentiable, and $ \sigma, \sigma'$ are square-integrable w.r.t. $\mathcal{N}(0,1)$. Moreover, $\sigma$ satisfies $\mu_2 := \E_{z\sim\mathcal{N}(0,1)}
[\sigma(z)(z^2-1)] = 0$, i.e.\ the second Hermite coefficient
vanishes (Appendix~\ref{app:hermite}).
This holds for odd activations and is a condition needed for the ``neuron decoupling'' central to our leave-one-out analysis (as in \cite{ren2025emergence}).
The complementary regime $\mu_2\neq 0$ is studied
in~\cite{maillard2024bayes}.

\subsection{Contributions} \label{sec:contributions}

\textbf{Sharp staircase of feature-learning transitions
(answers Q1; Result~\ref{res:main}.1).}
We derive a closed system of fixed-point equations for the individual
feature overlaps $(q_i)_{i=1}^k$ and the Bayes-optimal error
$\varepsilon^{\rm BO}$, valid for arbitrary readout distributions.
Feature $i$ is recoverable ($q_i>0$) if and only if its
self-consistent SNR
$\snr_i = (n/d)\, v_i^2/(\Delta+\varepsilon^{\rm BO})$
exceeds an activation-dependent threshold $\lambda_\sigma$. The $(q_i)$ cross it sequentially as $n$ grows,
producing a staircase of discontinuous overlap jumps and matching
drops in $\varepsilon^{\rm BO}$ (Fig.~\ref{fig:err-overlaps}) -- this mirrors what happens dynamically under SGD training \cite{ren2025emergence}. The
number $k_c\le k$ of features above threshold defines the
\emph{effective width} of the teacher, i.e. the number of information-theoretically learnable features given a fixed data budget.

\textbf{Bayes-optimal scaling laws unified through $k_c$
(answers Q2; Result~\ref{res:main}.2--3).}
The Bayes-optimal error decomposes transparently as
\(\varepsilon^{\rm BO}
   = \Theta(k_c d/n)
   + \Theta(\sum_{j>k_c} v_j^2)\),
i.e.\ a finite-sample term contributing $\Theta(d/n)$ per learnable
feature plus an irreducible bias from the unlearned ones. For
power-law readouts with $\beta>1/2$ the second term is dominated
by the first, collapsing both regimes of
Fig.~\ref{fig:sgd-student}~(right) to the single relation
$\varepsilon^{\rm BO}=\Theta(k_c d/n)$. The two empirical regimes
then arise from the two asymptotics of $k_c(n)$:
$k_c\sim(n/d)^{1/(2\beta)}$ in the \emph{feature-learning} regime
($n\ll k^{2\beta}d$), giving
$\varepsilon^{\rm BO}\sim n^{1/(2\beta)-1}$, and $k_c=k$ in the
\emph{refinement} regime ($n\gg k^{2\beta}d$), recovering the
classical parametric rate $n^{-1}$. The same picture explains the model-size
behavior of the \textsc{Adam}-trained student in
Fig.~\ref{fig:sgd-student}: a width-$k_s$ student near $k_c$
matches the Bayes-optimal data-size exponents, while wider students
overfit non-learnable directions (up to a small residual gap at the
weakest features that we attribute to a computational--statistical
threshold, see Section~\ref{sec:discussion}).

\textbf{A leave-one-out approach beyond dense readouts.}
Methodologically, our cavity argument reduces multi-feature inference
to a sequence of independent scalar generalized linear models, without
recourse to the replica method and for arbitrary readout structures.
As a special case it recovers the dense-readout result of
\cite{barbier2025statistical} (originally obtained via replicas) and
extends it: we characterize the full decay rate of
$\varepsilon^{\rm BO}$ as a function of readout sparsity, where the
dense case only yields $\Theta(1)$, while it can quickly decrease with $d$ for sparser readouts. The analysis covers all
activations with vanishing second Hermite coefficient ($\mu_2=0$),
in particular all odd activations, complementing the $\mu_2\ne 0$
regime treated in
\cite{defilippis2026optimal,maillard2024bayes}.

\begin{figure}
    \centering
    \subfigure{\includegraphics[width=0.47\linewidth]{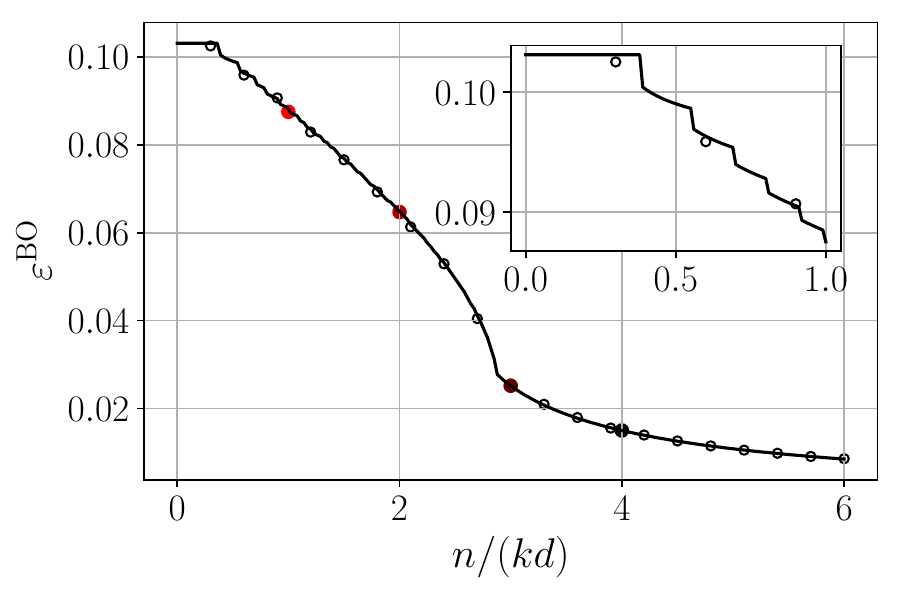}}
    \subfigure{\includegraphics[width=0.47\linewidth]{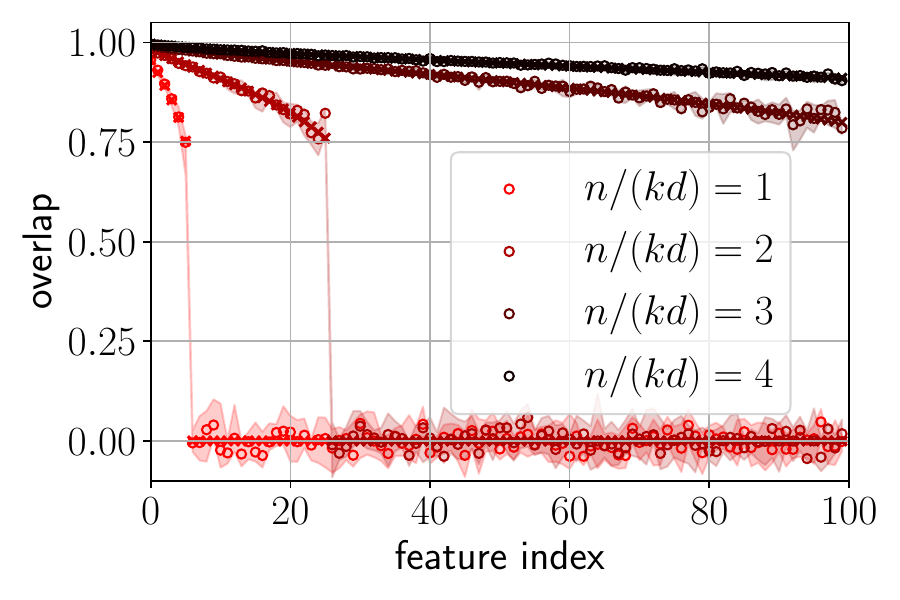}}
    \vspace{-10pt}
    \caption{Bayes-optimal learning curve (\textbf{left}) and feature
overlaps at selected values of $n/(kd)$ (\textbf{right}) for $\sigma(x)=\tanh(2x)$, $d=200$, $k=100$, $\Delta=0.04$, power-law readout with $\beta=0.3$. Solid lines and crosses: theoretical predictions from the
fixed-point equations~\eqref{eq:fpe}; circles: averages over 9 independent Hamiltonian Monte Carlo experiments; shaded regions: $\pm 1$ standard
deviation. Markers on the learning curve indicate the $n/(kd)$ values at
which the overlap profiles on the right are evaluated. The inset shows the staircase of discontinuous drops in $\varepsilon^{\rm BO}$ as successive features are learned (see \S\ref{sec:contributions}). For $\beta<1/2$, the error decay upon entering the refinement regime
is slower than $n^{-1}$; see Section~\ref{sec:discussion} for the comparison with the $\beta>1/2$ case.}
	\label{fig:err-overlaps}
\end{figure}

\subsection{Additional related works}
\textbf{Multi-index models.} 
Multi-index models (teacher width $k = \Theta(1)$ in the context of neural networks) are a well-developed
testbed for feature learning in SGD~\cite{arous2021online,
abbe2023sgd, damian2022neural}, Bayes-optimal~\cite{barbier2019optimal,
aubin2018committee, troiani2025fundamental}, and
computationally-hard~\cite{damian2023smoothing, damian2024computational}
regimes. Despite these important insights, multi-index models remain weakly expressive, and a recent substantial technical effort has been devoted to going beyond this fixed-width setting \cite{cui2021generalization, camilli2025information, maillard2024bayes, ren2025emergence, defilippis2025scaling, barbier2025statistical}. The present work contributes to this research line.
 
\textbf{Neural scaling laws.} 
Neural scaling laws have been studied from two broad perspectives. A first line of work operates in the non-feature-learning regime -- kernel methods, random feature models, or linear regression -- and derives scaling laws from spectral assumptions on the data covariance. Several of these works obtain scaling laws in both data size and model size \cite{bahri2024explaining, bordelon2024dynamical, lin2024scaling}, or characterize how the optimal model size scales with compute budget \cite{paquette20244+}. A second line of work introduces genuine feature learning, either through abstract task models \cite{hutter2021learning, michaud2023quantization, fonseca2024exactly, cagnetta2025learning} or through concrete neural network models in the multi-index or sublinear-width regime \cite{arous2025learning, ren2025emergence, defilippis2025scaling, defilippis2026optimal}. The quantization model of~\cite{michaud2023quantization} is
phenomenologically close to our framework: it postulates that
networks learn discrete units of knowledge sequentially, in order of
their contribution to loss, and that model capacity caps the number of
learnable units. These properties instead emerge from first principles in
our model.


Our closest predecessor~\cite{defilippis2026optimal} derives data-size scaling laws in the multi-index regime $k=\Theta(1)$ (numerically: $k=10$, $d=1000$, vs.\ $k=100$, $d=200$ here), using a hand-crafted rather than standard generalization error, and restricted to activations with $\mu_2\neq 0$; ours are complementary, covering $\mu_2=0$ and
hence all odd activations. On the algorithmic side, the optimal scaling laws in~\cite{defilippis2026optimal} are achieved by AMP, later extended in~\cite{defilippis2026noise} to activations with $\mu_2\neq 0$ and even to even functions with $\mu_2=0$: for AMP, the data-limited exponent acquires a multiplicative correction from a noise-sensitive exponent defined therein, while the information-theoretic exponents are
conjectured to be unchanged. Despite all these differences, these three works obtain the same BO data-size scaling exponent, suggesting universality across activation classes and width regimes.

\textbf{Bayes-optimal learning of extensive-width networks.}
In the linear data regime $n =\Theta(d)$, the BO problem reduces to linear
regression~\cite{cui2021generalization, camilli2025information}.
For quadratic activations at $n =\Theta(d^2)$, \cite{maillard2024bayes}
characterized the BO error but found no feature recovery, due to 
rotational symmetry intrinsic to this specific setting.
The general case was resolved by~\cite{barbier2025statistical} using
the replica method from physics, for dense readouts and multiple layers.
Our work specializes to one hidden layer and $\mu_2=0$, but generalizes
to arbitrary readout structure. Moreover, the present leave-one-out approach, though remaining heuristic, is arguably more direct and does not require familiarity with the replica formalism.

\section{Main results} \label{sec:results}
Before stating the results, we introduce the following definitions. For an input $\bx$, we denote the corresponding response by $y(\bx, \bW^0) := \bv^{\top} \sigma(\bW^0 \bx)$, which emphasizes the dependence of the label on $\bW^0$ (the readout $\bv$ being fixed). Given a response predictor $\hat y(\bx, \mathcal D)$, we use the standard definition of mean square (prediction) error (MSE): $\E_{\mathcal D, \bW^0}\,
\E_{\bx \sim \mathcal N(0, \bI_d)}
(y(\bx, \bW^0) - \hat y(\bx, \mathcal D))^2$. Recall that the Bayes-optimal error $\varepsilon^{\mathrm{BO}}$ is attained by the Bayesian student $\hat y(\bx, \mathcal D) = \E [y(\bx, \bW)\mid\mathcal D]=:\langle y(\bx, \bW)\rangle$, where the bracket notation $\langle \,\cdot\,\rangle$ is used to denote an average w.r.t. the Bayesian posterior distribution: $\bW\sim P( \bW \mid\mathcal D)\propto \nu(\bW) \prod_{\mu\le n}\exp(-\frac{1}{2\Delta} (y_\mu^{\rm out} - y(\bx_\mu,\bW))^2)$, with $\nu(\,\cdot\,)$ the uniform measure over the Stiefel manifold of $k \times d$ matrices with orthonormal rows.

For any function $\sigma \in L_g^2$, define the kernel $g_\sigma$ on $[0,1]$ and $m_\sigma$ on $[0, \infty)$ as
\begin{align}
g_\sigma(x) &:= \E\big[\sigma(z_1)\sigma(z_2)\big], \quad (z_1,z_2) \sim {\mathcal N\!\left(0, \begin{bmatrix} 1 & x \\ x & 1 \end{bmatrix} \right)}, \label{defs0} \\
m_\sigma(\lambda) &:= \arg \max_{q \in [0,1)} \big\{\lambda g_\sigma(q) + q + \ln(1-q)\big\} .\label{defs}
\end{align}
We also define $ \lambda_\sigma := \sup \{ \lambda \geq 0: m_\sigma(\lambda) = 0 \}$. Basic properties of the functions $g_\sigma$ and $m_\sigma$ are established in Appendix \ref{app:msig}. We will see in Section~\ref{sec:derivations} that $m_\sigma(\lambda)$ is related to the overlap between signal and Bayes estimator of a single feature/latent direction in the context of a generalized linear model in a ``high sample rate'' regime \cite{barbier2019optimal}, see Result~\ref{res:glm-highnoise}. It is non-decreasing with a
discontinuity at $\lambda_\sigma$ 
(Appendix~\ref{app:msig}).
$\lambda_\sigma$ is thus the critical SNR below which no estimator can recover the feature.

\paragraph{A note on rigor.}
The extensive-width regime $k=\Theta(d)$ of Bayesian neural networks is particularly challenging to analyze, with only few rigorous results in simpler settings, see e.g. \cite{camilli2025information}. Our derivations are therefore non-rigorous, but based on powerful heuristic approaches: the cavity (leave-one-out) method and on
hypothesized Gaussianity of so-called cavity fields -- techniques standard in statistical physics that are proved rigorously in simpler related
models~\cite{talagrand2010mean, barbier2019optimal, Montanari_SE} and
conjectured to extend here.
They rely on concentration properties of the posterior; in particular,
\begin{align}
    \El{(\bw^0_i\cdot\bw^1_j)^2} = O(d^{-1}) \ \text{ for } i\neq j,
    \qquad
    \El{(\bw^0_i\cdot\bw^1_i - q_i)^2} = O(d^{-1}),
    \label{concentration-2}
\end{align}
for $\bW^1\sim P(\,\cdot\mid\mathcal{D})$. In words, we assume that distinct features decouple under the posterior, and that each overlap concentrates to a deterministic limit $q_i$. The feature-decoupling assumption is proved in Appendix~\ref{app:main-res} for the special case of dense readouts in the extensive-width, near-interpolation regime discussed in Section~\ref{sec:discussion}, while the proof of the overlap concentration between matched features $\bw^0_i\cdot\bw^1_i$ follows from a direct application of standard perturbation techniques from mathematical physics \cite{barbier2022strong}. We assume that these concentrations continue to hold in the more general scaling regimes considered in the paper without proof. All predictions are vindicated by Hamiltonian Monte Carlo simulations
across all parameter regimes, with excellent agreement already at
$d\sim 10^2$. 


Before stating the results, let us mention that if one removes the linear component of the activation $\sigma$, i.e. replace it by $\tilde \sigma(x) := \sigma(x) - \mu_0 - \mu_1 x$, then the overlaps $q_{i}$ and Bayes-optimal error we derive remain asymptotically the same (Appendix \ref{app:linear-part}). Consequently, in the subsequent claims, we assume without loss of generality that $\mu_0 = \mu_1= 0$ in addition to our assumption $\mu_2 =0$; in other words, $\sigma$ has \emph{information exponent} at least $3$ \cite{arous2021online}, as in the complementary dynamical analysis of \cite{ren2025emergence}.

\begin{result} \label{res:main}
Consider the setting described in Section~\ref{sec:problem}. For $i \in \N$, let $\mu_i$ be the $i$-th Hermite coefficient of $\sigma$ (Appendix \ref{app:hermite}). 
\begin{enumerate}
\item \textbf{Feature overlaps and Bayes-optimal error:} Overlaps $(q_i)_{i=1}^k$ solve the fixed-point system
\begin{align}
q_{i} &= m_\sigma \Big( \frac{n v_i^2/d}{\Delta + \varepsilon} \Big), \qquad \varepsilon = \sum_{i=1}^k v_i^2 (g_\sigma(1) - g_\sigma(q_{i})) \label{eq:fpe}
\end{align}
and the BO error satisfies $\varepsilon^{\rm BO} \sim \varepsilon$. Feature $i\in[k]$ is \emph{learnable} if $q_i>0$. Denote by $k_c=k_c(k,d,n)=|\{i : q_i>0\}|$ the \emph{effective width} of the teacher.


\item  \textbf{Asymptotic decomposition of the BO error.} The optimal error decomposes as
\begin{align}\label{eq:err-asymp}
\varepsilon^{\rm BO}
= \Theta\Big(\frac{k_c d}{n}\Big)
+\Theta\Big(\sum_{j>k_c} v_j^2\Big).
\end{align}
The first term corresponds to the estimation error on learnable features; the second to the irreducible error from non-learnable features.

\item \textbf{Bayes-optimal scaling law.}  Fix $k,d$ for the teacher (and for the Bayes-optimal student). Consider power-law distributed readouts with
exponent $\beta>1/2$. From eq.~\eqref{eq:err-asymp} and the characterization of $k_c$ in
Appendix~\ref{app:scaling}, the Bayes-optimal error and effective width obey the following scalings:
\begin{align*}
\varepsilon^{\rm BO} =
\begin{cases}
\Theta_{k,d}(n^{\frac{1}{2\beta}-1}) \\
\Theta_{k,d}(n^{-1}) 
\end{cases}  
\! \! \!\! \! k_c =
\begin{cases}
\Theta_{k,d}(n^{1/(2\beta)}) & n \ll k^{2\beta}d
\qquad \text{(feature-learning regime)},\\
k & n \gg k^{2\beta}d
\qquad \text{(refinement regime)}.
\end{cases}
\end{align*}

\end{enumerate}
\end{result}

The numerical method used to solve \eqref{eq:fpe} is described in Appendix~\ref{app:fpe}. This procedure consistently finds a single solution. Uniqueness can be easily checked numerically via the one-parameter equation
\eqref{eq:fixedPointUnique} over $\varepsilon$. Uniqueness for $\varepsilon$ then implies the one for the $(q_i)$ via \eqref{eq:fpe}.

\begin{remark}[Statistical compressibility]\label{rem:compress}
When $k_c < k$ the teacher is \emph{compressible}.  The same leave-one-out Gaussian argument
(Section~\ref{sec:derivations}) suggests that the $n$ teacher
responses are statistically indistinguishable -- in the sense of
equal Bayes-optimal error -- from those of the reduced model
\begin{align}\label{eq:compressed}
  y_\mu = \bv_{1:k_c}^\top\sigma(\bW^0_{1:k_c}\bx_\mu)
  + \sqrt{\Delta+\Delta'}\,z_\mu,
  \qquad
  \Delta' := g_\sigma(1)\sum_{j>k_c} v_j^2.
\end{align}
Indeed, by definition of $k_c$, the terms $\sum_{i > k_c} v_i \sigma(\bw_i^0 \cdot \bx_\mu)$ are not learnable and act as
effective noise with variance $\Delta'$. When $k-k_c \gg 1$, a central limit/Gaussianity assumption applied to this noise and
combined with the label noise yields the reduced model~\eqref{eq:compressed}. This is the information-theoretic basis for the empirical finding that a reduced-width student can outperform a full-width
one (Fig.~\ref{fig:sgd-student}, left). On the other hand, when $k-k_c = O(1)$, the Gaussian approximation does not hold for the noise induced by the non-learnable terms. However, this contribution is negligible compared to $\Delta$.
\end{remark}

\section{Discussion}\label{sec:discussion}

\textbf{Sharp staircase learning transitions.} Consider the system \eqref{eq:fpe}. Since $m_\sigma(x)=0$ if $x \leq \lambda_\sigma$ by definition, the feature $i$ is learnable if and only if 
\begin{align}\label{eq:learnable-cond}
    \snr_i: =  \frac{(n/d) v_i^2}{\Delta + \varepsilon} > \lambda_\sigma.
\end{align}
The \emph{effective signal-to-noise ratio} $\snr_i$ entering \eqref{eq:fpe} is determined by the $i$-th feature strength $v_i^2$ (which depends on $k$, e.g. it scales as $1/k$ in the dense case), the sampling rate $n/d$, and the effective noise variance $\Delta + \varepsilon$. As $n$ grows, and assuming that $v_{i+1}>v_i$ strictly, $(\snr_i)_i$ grow and cross the threshold \eqref{eq:learnable-cond} sequentially -- thus increasing $k_c$ --, leading to a staircase sequence of phase transitions, see Fig.~\ref{fig:err-overlaps} (left panel) and \cite{ren2025emergence,defilippis2026optimal} for similar phenomena in settings of sub-linear width ($k\ll d$). 

\textbf{Decomposition of the Bayes-optimal error.} Equation~\eqref{eq:err-asymp} has a transparent interpretation. Firstly, the underlying mechanism is that each learned feature contributes $\Theta(d/n)$ to the generalization error, while a non-learned feature $i$ contributes $\Theta(v_i^2)$. Secondly, writing $\varepsilon^{\rm BO}=\Theta(k_c d/n)$ when the second term is
negligible, the BO error scales as the inverse of the
\emph{effective sampling rate} $n/(k_c d)$: the teacher behaves as
a model with $k_c$ features, each requiring $\Theta(d)$ samples to
estimate.
The data-size scaling laws of $\varepsilon^{\rm BO}$ in both regimes thus
follow directly from how $k_c$ depends on $n$ (Fig.~\ref{fig:sgd-student}).

\textbf{Bayes-optimal scaling law.} With power-law readouts with $\beta>1/2$, the crossover at $n^\star = \Theta(k^{2\beta}d)$ in the scaling law for $\varepsilon^{\rm BO}$ marks the point at
which the weakest feature $k$ becomes learnable.
i.e. the smallest $n$ s.t. \
$(n/d)v_k^2 =\Theta((n/d)k^{-2\beta})  \gtrsim \lambda_\sigma$.
For $n\ll n^\star$, only $k_c = \Theta((n/d)^{1/(2\beta)})\ll k$
features are above threshold and are learned sequentially; with $k_c$ defined by the condition that $\snr_{k_c}\propto (n/d)k_c^{-2\beta}$ is greater than the threshold $\lambda_\sigma$ while $\snr_{k_c+1}$ is not, see Appendix~\ref{app:scaling} for more details. Instead, when $n\gg n^\star$ all features are learned (i.e. have overlap $q_i>0$) and additional data only refines them, yielding the feature-learning vs. refinement regimes, see Fig.~\ref{fig:sgd-student}.

Still with $\beta>1/2$, the second error term in \eqref{eq:err-asymp} corresponding to non-learnable features is of order smaller or equal to the error corresponding to learnable feature, see Appendix~\ref{app:scaling}. Consequently, both scaling laws unify into the single expression involving the effective sampling rate $n/(k_c d)$:
\begin{equation}
    \varepsilon^{\rm BO} = \Theta(k_c d / n).
\end{equation}
In the feature-learning regime $k_c$ grows with $n$, which partially
offsets the $1/n$ decay and yields the slower exponent
$1/(2\beta)-1 < 1$; in the refinement regime $k_c = k$ is fixed,
recovering the $n^{-1}$ rate.
The effective width $k_c$ thus provides a unified account of both
scaling regimes. These scaling laws provide a concrete theoretical instantiation of
the quantization model of~\cite{michaud2023quantization}: their
\textit{capacity hypothesis} -- that the number of learnable quanta grows
with model size -- emerges here directly from the width bottleneck
below $k_c$ and the sharp SNR threshold $\lambda_\sigma$.

We emphasize that the analytic formula $k_c = \Theta((n/d)^{1/(2\beta)})$ in the feature-learning regime
provides a first-principles prescription for the optimal student width directly from the data budget, without retraining at each candidate model size -- a concrete theoretical underpinning for \textit{hyperparameter transfer} of model width~\cite{yang2021tuning}: rather than fitting the optimal size empirically by training multiple students, the exponent $\beta$ of the feature hierarchy determines how to optimally scale the model width $k_s$ as $n$ varies, in the same spirit as compute-optimal scaling laws~\cite{hoffmann2022training}.

As a final remark, although the scaling within the refinement regime is $n^{-1}$ in both cases, the behavior of the BO error just before entering this regime differs between $\beta > 1/2$ and $\beta < 1/2$. When $\beta < 1/2$ and $i \sim k$, we have $v_i^2 \sim (1-2\beta)i^{-2\beta} k^{2\beta-1} \sim (1-2\beta)k^{-1}$ , meaning that near the edge $\bv$ behaves like a dense vector. Consider the two cases close to entering the refinement regime: for the same fractional increase in the data size, since many features become learnable when $\beta < 1/2$ due its dense-like behavior, it yields a steeper descent before the refinement regime is reached (Fig. \ref{fig:err-overlaps}).

\textbf{Scaling laws for SGD-trained student.} We now discuss the
empirical findings of Fig.~\ref{fig:sgd-student} for SGD-trained students under the same fixed-teacher setting as before with power-law readout of exponent $\beta > 1/2$, namely that there exist: (1) a critical (i.e., optimized) width beyond which performance under \textsc{Adam} degrades, with a scaling law in model size that holds up to this threshold, and (2) two distinct scaling laws in data size, corresponding again to feature learning and refinement regimes. 

\emph{Scaling with model size.} We start by discussing the scaling with model size when the student width $k_s$ lies below the critical width, which is itself smaller than $k$. In this case, the number of features that the student can learn is equal to its width $k_s$. This is related to the \textit{capacity hypothesis} in \cite{michaud2023quantization}.

On the other hand, we observe empirically a mechanism similar to that underlying \eqref{eq:err-asymp}: each learned feature contributes $\Theta(d/n)$ to the generalization error, while each non-learned feature $i$ contributes a term of order $\Theta(v_i^2)$. Accordingly, the errors contributed by learned and non-learned features are $\Theta ( k_s d/n)$ and $\Theta ( \sum_{j >  k_s} v_j^2)$ respectively. Since the student cannot learn more features than the BO student, we have $k_c \geq k_s$, leading to
\begin{align*}
    \sum_{j > k_s} v_j^2  \geq \sum_{j > k_c} v_j^2 \gtrsim k_c d/n \geq  k_s d/n
\end{align*}
where the second inequality holds in the feature-learning regime (Appendix~\ref{app:scaling}), i.e. the current setting. Thus the SGD-student's error is $\Theta (\sum_{i > k_s } v_i^2 )=
\Theta ( \sum_{i > k_s } i^{-2\beta} )
    = \Theta ( \int_{k_s }^k x^{-2 \beta} \, dx ) = \Theta(k_s^{1-2\beta})$, giving the power-law scaling in model size with exponent $1-2\beta$
(dashed line, Fig.~\ref{fig:sgd-student}, middle).

Note that the optimally-sized student has a lower error overall, but a flatter slope in $n$, than the full-width student in the feature-learning regime:
precisely because it avoids fitting non-learnable directions, it starts
from a lower error floor and gains less from each additional sample.

\emph{Scaling with data.} That the exponents for the error as a function of $n$ of the SGD-trained model match the Bayes-optimal ones in the two regimes (Fig.~\ref{fig:sgd-student}, right panel) can be argued as follow. Firstly, a student of width $k_s \gg k_c$ overfits non-learnable directions; one near $k_c$ avoids this, as confirmed by Fig.~\ref{fig:sgd-student} (right). We conjecture that in this non-overfitting regime,
\textsc{Adam} learnability mirrors the Bayes-optimal one: feature
$i$ is recoverable if and only if $(n/d)v_i^2$ exceeds an
algorithmic threshold $\lambda_\sigma^{\rm algo}\ge\lambda_\sigma$,
with the inequality reflecting a possible computational-to-statistical
gap~\cite{barbier2025statistical} which can be present for \textsc{Adam}, see \cite{barbier2025statistical}. Under this conjecture, the scaling-law derivation above applies verbatim, yielding the same exponents. This conjecture is supported by numerics, and is provably correct in simpler models, in particular in the generalized linear model -- which is closely connected to the present model via the leave-one-out argument in the next section -- when using the Bayesian approximate message-passing algorithm \cite{Montanari_SE,barbier2019optimal}.

We note one deviation from the predicted $n^{-1}$ slope at large $n$
in the refinement regime: roughly $5\%$ of the weakest features
remain non-learnable under \textsc{Adam}, causing the empirical
exponent to deteriorate. This residual gap in Fig.~\ref{fig:sgd-student}
(right) points to a small but persistent computational-to-statistical
gap for the weakest features, and is consistent with an algorithmic threshold $\lambda_\sigma^{\rm
algo} > \lambda_\sigma$.

\paragraph{Extensive-width, near-interpolation regime.} 

We also analyzed the linear-width, near-interpolation regime scaling regime $k \sim \gamma d$,
$n \sim \alpha d^2$ ($\alpha,\gamma>0$ fixed), where $k$, $d$,
and $n$ diverge \emph{together} -- a regime with rich feature-learning
phenomenology that attracts strong theoretical interest~\cite{maillard2024bayes, barbier2025statistical,
camilli2025information, defilippis2024dimension} connected to
compute-optimal scaling in large language
models~\cite{hoffmann2022training}.
The table below (Appendix~\ref{app:d-asymp}) reports the
asymptotic $\varepsilon^{\rm BO}$ and effective width $k_c$ for readout distributions of increasing sparsity in this regime. The $\Theta(\cdot)$ rates are in $d$ as $d\to\infty$ at fixed
$(\alpha,\gamma)$. 

\renewcommand{\arraystretch}{1.3}
\begin{table*}[h]
\centering
\begin{tabular}{c|c|c}
\textbf{Readout distribution} & \textbf{Bayes-optimal error }$\varepsilon^{\rm BO}$ & \textbf{Effective width} $k_c$ \\ \hline
Dense & $\Theta(1)$ & $\Theta(d)$  \\ \hline 
Power-law, $\beta < 1/2$ & $\Theta(1)$ & $\Theta(d)$  \\ \hline
Power-law, $\beta = 1/2$ & $\Theta(\frac{\ln \ln d}{\ln d})$ & $\Theta ( \frac{d}{\ln d} ) $  \\ \hline 
Power-law, $\beta > 1/2$ & $\Theta(d^{\frac{1}{2\beta}-1})$ & $\Theta(d^{\frac{1}{2 \beta}})$ \\ \hline
Exponentially decaying & $\Theta(\frac{\ln d}{d})$ & $\Theta(\ln d)$ \\ \hline
Ultra sparse & $\Theta(\frac{1}{d})$ & $\Theta(1)$
\end{tabular}
\end{table*}
Sparsity has a clear impact on the scaling of the error and effective width: delocalized readouts ($\beta \le 1/2$ or dense) saturate at
$\varepsilon^{\rm BO} =\Theta(1)$ with $k_c = \Theta(d)$
learnable features, while sparser readouts allow decay down to
$\Theta(1/d)$ due to a finite effective width for ultra-sparse ones. For dense readouts we independently recover
the result of~\cite{barbier2025statistical} in Appendix~\ref{app:exact}; for power-law readouts we also characterize the limiting overlap profile.

\section{Derivation of the main results}\label{sec:derivations}
Our main results rely on a crucial statistical equivalence between the learning problem in the neural network model from the perspective of a single feature and a simpler ``high-sample rate generalized linear model''; it is derived in Appendix~\ref{app:glm}.
\begin{result}\label{res:glm-highnoise}
Let $\sigma$ be a non-constant, non-even, differentiable function such that $ \sigma, \sigma'$ are square-integrable w.r.t. the standard Gaussian measure on $\R$ (we do not need here to restrict to the case $\mu_2 = 0$). Let $n, d, \Delta$ be in the regime $n \gg d \gg 1$, and define $\lambda := \frac{n}{d \Delta}$. Consider the problem of inferring the signal $\bw^0 \sim \mathcal{U}(\mathbb{S}^{d-1})$ from the observations $\bY = (y_\mu)_{\mu=1}^n$ given by:
\begin{align}
    y_\mu = \sigma\!\left(\bw^0 \cdot \bx_\mu\right) + \sqrt{\Delta}\, z_\mu, \quad \mu \in [n],
\end{align}
 with inputs $\bx_\mu \sim \mathcal N(0, \bI_d) $ and i.i.d. noise variables $z_\mu \sim \mathcal N(0,1)$. Let $\bw^1$ be a sample from the posterior $P(\bw \mid \bY)$. Then, recalling definitions \eqref{defs}, we have
\begin{align} \label{eq:concentration-glm}
\El{ (\bw^0 \cdot \bw^1 - m_\sigma(\lambda) )^2 } = o_d(1). 
\end{align}
\end{result}

In particular, when the information exponent of $\sigma$ is at least $3$ (Appendix~\ref{app:hermite}), the model exhibits a sharp phase transition at $\lambda_\sigma$:  $m_\sigma(\lambda) = 0$ if $\lambda < \lambda_{\sigma}$, whereas $m_\sigma(\lambda) > 0$ if $\lambda > \lambda_\sigma$ (Appendix \ref{app:msig}).

We now provide the main steps of our heuristic derivation of Result~\ref{res:main}  based on the leave-one-out argument and Gaussian assumptions.

\textbf{Feature overlaps and Bayes-optimal error.} 
Recall that $\langle \,\cdot\, \rangle$ denotes the posterior mean w.r.t. $P(\bW \mid \mathcal D)$, and $\bW^1$ is a posterior sample. Fix $i \in [k]$. The leave-one-out approach relies on isolating the contribution of feature $\bw_i^0$ in the model \eqref{eq:model}:
\begin{equation}\label{eq:rewrite}
y_\mu^{\rm out} = v_i\, \sigma\!\left( \bw_i^0 \cdot \bx_\mu \right) + R_{\mu i} + \sqrt{\Delta}\, z_\mu,
\qquad
R_{\mu i} := \sum_{j\neq i} v_j\, \sigma\!\left(\bw_j^0 \cdot \bx_\mu\right).
\end{equation}
Decompose $R_{\mu i}$ into its posterior mean and a posterior fluctuation that acts as an effective noise:
$R_{\mu i} = \langle R_{\mu i} \rangle + (R_{\mu i} - \langle R_{\mu i} \rangle)$. The posterior mean $\langle R_{\mu i}\rangle$ is a deterministic function of the dataset $\mathcal D$ that can be absorbed into the observation $y_\mu^{\rm out}\to y_\mu^{\rm out} - \langle R_{\mu i} \rangle$ without changing the posterior. We thus focus on the effective noise, whose variance is 
    $\langle (R_{\mu i} - \langle R_{\mu i} \rangle )^2 \rangle \sim \sum_{j \neq i} v_j^2(g_\sigma(1)-g_\sigma( q_j )) := \varepsilon_{-i}$. This together with the relation below,
\begin{align}\label{eq:BO_main}
    \varepsilon^{\rm BO} \sim \sum_{i = 1}^k v_i^2 (g_\sigma(1) - g_\sigma(q_i)) := \varepsilon.
\end{align}
are derived for a special case -- and assumed to hold in general -- in Appendix~\ref{app:main-res}.
\emph{We make the additional assumption that, with fixed $i$ while $\mu$ varies, the effective noises behave like independent Gaussian variables} -- this central limit theorem heuristic, standard in the cavity method \cite{talagrand2010mean}, was tested numerically and verified a-posteriori via the excellent match between our theory and experiments, as well as via the analytical match with the equations of \cite{barbier2025statistical} in the dense readout case. Combining the effective noise with the original label noise yields a single effective Gaussian noise of variance $\Delta+\varepsilon_{-i}$, so that~\eqref{eq:rewrite} (after subtracting the deterministic $\langle R_{\mu i}\rangle$) takes the form
\begin{equation}\label{eq:effective-glm}
\text{observation}_\mu = v_i \sigma\!\left( \bw_i^0\cdot \bx_\mu \right) + \sqrt{\Delta + {\varepsilon_{-i}}}\, \text{noise}_\mu.
\end{equation}
By Result \ref{res:glm-highnoise},
\begin{align}
    \bw^0_i \cdot \bw^1_i \simeq m_\sigma \Big( \frac{nv_i^2/d}{\Delta + \varepsilon_{-i} } \Big) \simeq m_\sigma \Big( \frac{nv_i^2/d}{\Delta + \varepsilon } \Big)
\end{align}
where in the last step we used $\Delta + \varepsilon_{-i} = \Delta + \varepsilon+O(1/k)$. Together with the equation for $\varepsilon$, this closes the system of fixed-point equations~\eqref{eq:fpe}.


\textbf{Asymptotic Bayes-optimal error.} From \eqref{eq:BO_main}, and using $q_i=0$ for $i>k_c$, we have 
\begin{align*}
    \varepsilon^{\rm BO} \sim \sum_{i \leq k_c} v_i^2 (g_\sigma(1) - g_\sigma(q_i)) + \sum_{k_c<i\leq k} v_i^2 g_\sigma(1).
\end{align*}
The second sum, which is the error due to non-learnable features, is $\Theta( \sum_{k_c<i\leq k} v_i^2 )$. On the other hand, from the fixed point equations \eqref{eq:fpe}, we have  $q_i = m_\sigma({\rm snr}_i)$, where recall that ${\rm snr}_i= (n/d)v_i^2/(\Delta + \varepsilon)$. For any $i \leq k_c$, feature $i$ is learnable, so ${\rm snr}_i > \lambda_\sigma$. The first sum becomes
\begin{align*}
    (\Delta+\varepsilon) \frac{d}{n} \sum_{i \leq k_c}  {\rm snr}_i\, \big(g_\sigma(1) - g_\sigma \circ m_\sigma({\rm snr}_i) \big),
\end{align*}
which is $\Theta(k_cd/n)$, since $\Delta + \varepsilon = \Theta(1)$ and there exist absolute constants $c_1, c_2 >0$ such that $x(g_\sigma(1)-g_\sigma \circ m_\sigma(x)) \in [c_1, c_2]$ for all $x> \lambda_\sigma$ (Appendix \ref{app:msig}).

From the derivation, each learnable feature contributes $\Theta(d/n)$ to the BO error, while each non-learnable feature $i$ contributes $\Theta(v_i^2)$.

\section{Conclusion and limitations} \label{sec:conclusion}
We studied the information-theoretic limits of learning a
one-hidden-layer teacher network in the extensive-width regime
$k=\Theta(d)$, a setting that captures large-but-finite-width
networks and sits strictly between the multi-index and NTK/infinite-width extremes. We provided a heuristic leave-one-out argument yielding closed
fixed-point equations for the individual feature overlaps $(q_i)$
and the Bayes-optimal error $\varepsilon^{\rm BO}$. A single
activation-dependent SNR threshold $\lambda_\sigma$ governs feature
learnability: as $n$ grows, features cross it one by one, producing
a staircase of sharp phase transitions in the learning curve. For
a power-law hierarchy of features with exponent $\beta>1/2$, the resulting effective width
$k_c\le k$ unifies the two empirically observed scaling regimes via
$\varepsilon^{\rm BO}=\Theta(k_c d/n)$: a feature-learning regime in
which $k_c$ grows with $n$, and a refinement regime in which $k_c=k$
is saturated and the parametric rate $n^{-1}$ is recovered. An
\textsc{Adam}-trained student near $k_c$ tracks these Bayes-optimal
exponents up to a small residual gap on the weakest features,
consistent with an algorithmic threshold
$\lambda_\sigma^{\rm algo}>\lambda_\sigma$.

The main limitations of our work are that derivations rest on unproved cavity assumptions
-- Gaussian cavity fields and overlap concentration -- and the
restriction $\mu_2=0$. Tackling general activations in the extensive-width regime $k=\Theta(d)$ is a technical challenge that has been considered only recently  \cite{barbier2025statistical}. Closing the $\mu_2$
gap~\cite{defilippis2026optimal,maillard2024bayes}, proving the
leave-one-out reduction, extending to multi-layer
architectures~\cite{barbier2025statistical}, and characterizing
compute-optimal scaling~\cite{hoffmann2022training} are natural
next steps.

\section*{Acknowledgments}
J.B. and M.-T.N. were funded by the European Union (ERC, CHORAL, project number 101039794). Views and opinions expressed are however those of the authors only and do not necessarily reflect those of the European Union or the European Research Council. Neither the European Union nor the granting authority can be held responsible for them.


\bibliography{power_law}
\bibliographystyle{plain}

\newpage
\appendix
\onecolumn




\section{Hermite expansion and information exponent}\label{app:hermite}
The $\ell$-th Hermite polynomial is defined as
\begin{align}\label{eq:hermite-def}
\He_\ell(x) := (-1)^\ell e^{x^2/2} \frac{d^\ell}{dx^\ell} e^{-x^2/2}, 
\quad \ell \ge 0.
\end{align}
$\He_\ell$ is a polynomial of degree $\ell$ with leading coefficient $1$. A first few examples are $\He_0(x)=1, \He_1(x)=x, \He_2(x)=x^2-1$, $\He_3(x)=x^3-3x$. Any $\sigma \in L^2(\mu)$ ($\mu$ is the standard Gaussian measure) admits the \emph{Hermite expansion}
\begin{equation}\label{eq:hermite-expansion}
\sigma(z) = \sum_{\ell=0}^\infty \frac{\mu_\ell}{\ell!} \He_\ell(z),
\end{equation}
in which the $\ell$-th \emph{Hermite coefficient} of $\sigma$ is given by $\mu_\ell = \E_{z \sim \mathcal N(0, 1)}\sigma(z) \He_\ell(z)$.

An important quantity is the \emph{information exponent} of $\sigma$ \cite{arous2021online}, defined as
\begin{equation}\label{eq:info-exponent}
k_\star(\sigma) := \min \{ \ell \ge 1 : \mu_\ell \neq 0 \}.
\end{equation}

\section{Heuristic derivation of Result \ref{res:glm-highnoise}} \label{app:glm}
We provide here a heuristic derivation of Result \ref{res:glm-highnoise}. We start by deriving the fixed point equation for $m_\sigma(\lambda)$. Assuming $\bw^0 \cdot \bw^1 \simeq q$ where $\bw=\bw^1$ is a posterior sample, we will show that $q/(1-q) = \lambda\, g'_\sigma(q)$. Fix $i \in [d]$ and focus on the estimation of $w^0_i$
\begin{align*}
    y_\mu = \sigma \left( w^0_i x_{\mu i} + A_{\mu i} \right) + \sqrt{ \Delta} \, z_\mu, \quad \mu \in [n],
\end{align*}
where $A_{\mu i} = \sum_{j \neq i} w_j^0 x_{\mu j}$. Since $w^0_i x_{\mu i}$ is small compared to $A_{\mu i}$, we can expand the model around $A_{\mu i}$ and obtain
\begin{align*}
    y_\mu &\simeq \sigma(A_{\mu i}) + \sigma'(A_{\mu i}) w^0_i x_{\mu i} + \sqrt{\Delta} z_\mu, \quad \mu \in [n].
\end{align*}
We claim that $\sigma(A_{\mu i})$ and $\sigma'(A_{\mu i})$ can be replaced by their posterior averages $\langle \sigma(A_{\mu i})\rangle$ and $\langle \sigma'(A_{\mu i})\rangle$: the fluctuations around the average contribute additional noise, negligible against the channel noise with variance $\Delta$. This is because, if $\Delta = \Theta(1)$, then since $n \gg d$, the overlap for $\bw^0$ is almost $1$, making the fluctuations of $\sigma(A_{\mu i})$ and $\sigma'(A_{\mu i})$ of order $o(1)$, and thus negligible compared to $\Delta$. On the other hand, if $\Delta \gg 1$, since the fluctuations of $\sigma(A_{\mu i})$ and $\sigma'(A_{\mu i})$ are bounded, they remain negligible compared to $\Delta$.

Subtracting $\langle \sigma(A_{\mu i})\rangle$ yields the effective scalar model
\begin{align*}
    \widetilde y_\mu = \langle \sigma'(A_{\mu i})\rangle\, w^0_i\, x_{\mu i} + \sqrt{\Delta}\, \widetilde z_\mu, \quad \mu \in [n],
\end{align*}
which is equivalent to a single Gaussian channel for $w^0_i$ with effective signal-to-noise ratio
\begin{align*}
    r_i := \frac{1}{\Delta} \sum_{\mu=1}^n \langle \sigma'(A_{\mu i}) \rangle^2 x_{\mu i}^2.
\end{align*}
Suppose that $r_i$ concentrates around its mean. Using $\E[x_{\mu i}^2]=1$ together with the approximate independence between $x_{\mu i}^2$ and $\langle \sigma'(A_{\mu i}) \rangle$, then replacing $A_{\mu i}$ by $\bw \cdot \bx_\mu$ inside the bracket (an $O(d^{-1/2})$ correction), we obtain
\begin{align*}
    \E[r_i] \sim \frac{n}{\Delta}\, \E\!\left[\langle \sigma'(\bw \cdot \bx_\mu)\rangle^2\right] = \lambda d\, \E\!\left[\langle \sigma'(\bw \cdot \bx_\mu)\rangle^2\right].
\end{align*}

\noindent\emph{Computing $\E[\langle\sigma'(\bw\cdot\bx_\mu)\rangle^2]$.} 
By Lemma~\ref{lem:talagrand} below, 
\begin{align*}
    \bw \cdot \bx_\mu \stackrel{(d)}{\simeq} \langle \bw \cdot \bx_\mu \rangle + \sqrt{1-q}\, \xi, \qquad \xi \iid \mathcal{N}(0,1).
\end{align*}
Since $\bx_\mu \sim \mathcal{N}(0, \bI_d)$, the average term is Gaussian with variance $\|\langle \bw \rangle\|^2 \simeq q$. We obtain
\begin{align*}
    \bw \cdot \bx_\mu \;\stackrel{(d)}{\simeq}\; \sqrt{q}\,\xi_0 + \sqrt{1-q}\,\xi_1, \qquad \xi_0,\,\xi_1 \iid \mathcal{N}(0,1),
\end{align*}
where $\xi_0$ is measurable with respect to the data and $\xi_1$ captures the posterior uncertainty. We obtain
\begin{align*}
    \E[\langle\sigma'(\bw\cdot\bx_\mu)\rangle^2] \simeq  \E_{\xi_0}  (\E_{\xi_1} \sigma'(\sqrt q \xi_0 + \sqrt{1-q} \xi_1))^2 = g_{\sigma'} (q) = g'_\sigma(q)
\end{align*}
where the second equality follows from Proposition \ref{prop:gm}. 

Summarizing the previous arguments, for each $i$, the posterior of $w_i$ is asymptotically that of a Gaussian channel with signal $w_i^0$ and SNR $\lambda d g'_\sigma(q)$. Assuming independence across individual channels, the posterior of $\bw$ is asymptotically that of a Gaussian channel with signal $\bw^0$ and the same SNR. This channel admits a simple SNR-overlap relation, $\snr/d = q/(1-q)$, yielding the fixed point equation
\begin{align*}
    q/(1-q) = \lambda g'_\sigma(q).
\end{align*}
This equation may have multiple solutions and thus does not uniquely determine $q$.  A unique characterization of the overlap is instead achieved by maximizing a large deviation rate function $I(q)$, defined informally via
\begin{align*}
    \frac{1}{d} \ln \mathbb P( \bw^0 \cdot \bw^1 \simeq q) \simeq I(q).
\end{align*}
The fixed point equation is then simply $I'(q) = 0$. Integrating both sides of $q/(1-q) - \lambda g'_\sigma(q) = 0$ suggests that, up to an affine transform, the rate function is
\begin{align}
    \lambda g_\sigma(q) + q + \ln(1-q), \label{eq:potential}
\end{align}
which yields the unique characterization of the overlap given by $m_\sigma(\lambda)$ via its argmax. Importantly, this integration argument relies on the specific form of the fixed point equation: the left-hand side $q/(1-q)$ has a clear interpretation as the SNR of a Gaussian channel expressed in terms of the overlap, and it is this structure that makes this work. Note that integrating arbitrary equivalent form of the fixed point equation, such as $q = (1-q)\lambda g'_\sigma(q)$, does not recover the correct rate function. The resulting function \eqref{eq:potential} is called a ``replica symmetric potential'' in physics parlance, and can also be guessed by taking the joint high-sample-rate, high-noise limit, i.e. $\alpha=n/d\to\infty$ and $\Delta\to\infty$ with $\alpha/\Delta\to\lambda$ fixed, in the replica symmetric potential for the standard generalized linear model \cite{barbier2019optimal}.

\begin{lemma}\label{lem:talagrand}
(informal result by Talagrand \cite{talagrand2010mean}, Chapter~1.5)
Consider a probability measure $\mu=\mu_d$ in $\R^d$. Denote $\langle\, \cdot \, \rangle$ the average with respect to $\mu$. Suppose there exist $q$ and $\rho$ such that for large $d$:
\begin{align}
    \langle (\bw^1 \cdot \bw^2 - q)^2 \rangle = o_d(1) \label{talagrand-1} \\
    \langle (\|\bw\|^2 - \rho)^2 \rangle = o_d(1) \label{talagrand-2}.
\end{align}
Let $\bx \sim \mathcal N(0, \bI_d)$ independent of $\mu$. Then
\begin{align*}
    \bw \cdot \bx \stackrel{(d)}{\simeq} \langle \bw \cdot \bx \rangle + \sqrt{1-q}\, \xi, \qquad \xi \iid \mathcal{N}(0,1).
\end{align*}
\end{lemma}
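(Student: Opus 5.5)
The idea is to reduce the claim to the classical cavity computation in Talagrand's treatment of the SK/perceptron models. Write $\bw = \langle \bw\rangle + \dot\bw$ with $\dot\bw := \bw - \langle \bw\rangle$. The projection then splits as $\bw\cdot\bx = \langle \bw\rangle\cdot\bx + \dot\bw\cdot\bx$. Since $\bx$ is independent of $\mu$, conditionally on $\bx$ the first term is deterministic under $\langle\cdot\rangle$. It is exactly $\langle \bw\cdot\bx\rangle$. It therefore suffices to show that, for typical $\bx$, the law of $\dot\bw\cdot\bx$ under $\mu$ is approximately $\mathcal N(0,1-q)$, and in fact independent of the realized value of $\langle\bw\rangle\cdot\bx$. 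Two things need clarifying first. The statement implicitly takes $\rho=1$ (it writes $1-q$); in general the variance is $\rho-q$, and I would prove the general version. Also, ``$\stackrel{(d)}{\simeq}$'' means that for every bounded Lipschitz test function $F$ one has $\E_{\bx}\bigl|\langle F(\bw\cdot\bx)\rangle - \E_\xi F(\langle\bw\cdot\bx\rangle+\sqrt{\rho-q}\,\xi)\bigr| = o_d(1)$.

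\textbf{Step 1: second moments.} For fixed $\mu$, average over $\bx$. We get $\E_{\bx}\langle(\dot\bw\cdot\bx)^2\rangle = \langle\|\dot\bw\|^2\rangle = \langle\|\bw\|^2\rangle - \|\langle\bw\rangle\|^2 = \langle\|\bw^1\|^2\rangle - \langle\bw^1\cdot\bw^2\rangle$. By \eqref{talagrand-1}–\eqref{talagrand-2} this is $\rho-q+o_d(1)$. Hence the posterior variance of the projection has the right mean.

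\textbf{Step 2: Gaussianity via characteristic functions.} Use replicas. For $t\in\R$, consider $\Phi_d(t) := \langle e^{it\,\dot\bw\cdot\bx}\rangle$, a random function of $\bx$. I would show $\E_{\bx}|\Phi_d(t)-e^{-t^2(\rho-q)/2}|^2\to0$. Expanding the square gives averages over two independent replicas $\bw^1,\bw^2$ with a common centering $\langle\bw\rangle$. Gaussian integration over $\bx$ is exact: $\E_{\bx}e^{it(\dot\bw^1-\dot\bw^2)\cdot\bx} = e^{-t^2\|\dot\bw^1-\dot\bw^2\|^2/2}$. Now $\|\dot\bw^1-\dot\bw^2\|^2 = \|\bw^1\|^2+\|\bw^2\|^2-2\bw^1\cdot\bw^2$. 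By the concentration hypotheses this equals $2(\rho-q)+o(1)$ in $L^2(\langle\cdot\rangle)$. The cross terms $\E_{\bx}\Phi_d(t)$ are handled the same way: they need $\|\dot\bw\|^2 = \|\bw\|^2 - 2\bw\cdot\langle\bw\rangle + \|\langle\bw\rangle\|^2$. For this, note $\bw\cdot\langle\bw\rangle = \langle\bw^1\cdot\bw^2\rangle_{\bw^2}$ concentrates at $q$, by Jensen applied to \eqref{talagrand-1}. Since $|e^{-s}|\le1$ and $s\mapsto e^{-t^2 s/2}$ is Lipschitz on $[0,\infty)$, the $L^2$ concentration passes to the exponentials. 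This yields the $L^2$ convergence above for every fixed $t$.

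\textbf{Step 3: conclude.} Step 2 gives, in $\bx$-probability, pointwise convergence of the conditional characteristic function of $\dot\bw\cdot\bx$ to that of $\mathcal N(0,\rho-q)$. Tightness follows from Step 1. Lévy's continuity theorem, combined with a diagonal/countable-dense-set argument in $t$, then gives weak convergence of the posterior law of $\dot\bw\cdot\bx$, for typical $\bx$. Adding back the $\mu$-deterministic shift $\langle\bw\cdot\bx\rangle$ gives the stated decomposition with $\xi$ independent of $\bx$.

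The main obstacle is making ``$\stackrel{(d)}{\simeq}$'' precise and uniform. Pointwise-in-$t$ convergence in $L^2(\bx)$ must be upgraded to convergence of $\langle F(\bw\cdot\bx)\rangle$ for test functions, jointly with the random shift $\langle\bw\cdot\bx\rangle$. I would handle this by smoothing $F$ (Fourier representation with integrable transform) and dominated convergence in $t$. The extension to several inputs $\bx_\mu$, needed in Appendix~\ref{app:glm}, would go through the same computation with a vector of $t$'s, yielding independent $\xi_\mu$.
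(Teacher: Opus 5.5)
The paper does not prove this lemma. It quotes it informally from Section~1.5 of \cite{talagrand2010mean} and uses it heuristically. Your argument is a correct, self-contained proof, and your correction of the variance to $\rho-q$ is right: the paper's $\sqrt{1-q}$ is the unit-norm case $\rho=1$ used in its application.

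Your skeleton is the standard one. You measure the discrepancy in $L^2$ over the Gaussian input and expand the square with two replicas. Conditionally on the replicas, $(\bw^1\cdot\bx,\bw^2\cdot\bx)$ is exactly jointly Gaussian with covariance given by the overlaps, and overlap concentration then freezes that covariance. The route differs from Talagrand's in the test functions (I am recalling his section, not quoting it). He works with smooth test functions and compares Gaussian expectations under the random overlap covariance with those under $\bigl(\begin{smallmatrix}\rho & q\\ q & \rho\end{smallmatrix}\bigr)$ by Gaussian interpolation. That gives an explicit bound in terms of the overlap fluctuations directly. You instead take $F(u)=e^{itu}$, where the Gaussian average is explicit and only the Lipschitz continuity of $s\mapsto e^{-t^2 s/2}$ is needed. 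This is more elementary, at the price of a Fourier step to reach general $F$.

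That Fourier step is easier than your Step~3 suggests, and it makes the L\'evy/diagonal argument unnecessary. Write $a:=\langle\bw\rangle\cdot\bx$ and take $\hat F\in L^1$. Then
\begin{align*}
\Bigl|\langle F(a+\dot{\bw}\cdot\bx)\rangle-\E_\xi F\bigl(a+\sqrt{\rho-q}\,\xi\bigr)\Bigr|\le\frac{1}{2\pi}\int|\hat F(t)|\,\bigl|\Phi_d(t)-e^{-t^2(\rho-q)/2}\bigr|\,dt,
\end{align*}
because $|e^{ita}|=1$. The random shift therefore drops out, and dominated convergence with your Step~2 gives the claim in the $\bx$-averaged sense you defined.

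For merely bounded Lipschitz $F$, the transform need not be integrable even after smoothing. So first multiply $F$ by a cutoff, then convolve with a narrow Gaussian. The truncation error is controlled by $\E_{\bx}\langle(\bw\cdot\bx)^2\rangle=\langle\|\bw\|^2\rangle\to\rho$ and $\E_{\bx}a^2=\|\langle\bw\rangle\|^2\to q$.

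One caveat on your closing remark. Your multi-input extension covers fresh inputs independent of $\mu$. Appendix~\ref{app:glm}, however, applies the lemma to training inputs, on which the posterior depends. There the step is a cavity heuristic rather than a consequence of the lemma.
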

We applied this lemma when $\mu$ is the posterior distribution in the current model. The conditions \eqref{talagrand-1} and \eqref{talagrand-2} are satisfied by high-dimensional posterior distributions in more general settings of optimal Bayesian inference, as shown in \cite{barbier2022strong}.




\section{Estimating the linear part via linear regression} \label{app:linear-part}
We provide here the justification that the activation $\sigma(x)$ can be replaced by $\sigma(x) - \mu_0 -\mu_1 x$ without changing the asymptotic behavior of the posterior. 

Without loss of generality, we can assume that $\mu_0=0$, since this is a constant that can be subtracted from $y_\mu^{\rm out}$. We will show that $\bs^0 = \bW^{0 \top} \bv$ can be estimated with vanishing error with the following estimator
\begin{align} \label{eq:linear-part-estimation}
\hat \bs &= \frac{1}{n \mu_1} \bX \by.
\end{align}

Writing $\by$ as
\begin{align*}
    \by = \mu_1 \bX^\top \bs^0 + \bxi, \quad \bxi = \bv^\top \tilde \sigma(\bW^0 \bX) + \sqrt{\Delta} \bz,
\end{align*}
we have
\begin{align*}
    \hat \bs &= \frac{1}{n} \bX \bX^\top \bs^0 + \frac{1}{n \mu_1} \bX \bxi.
\end{align*}
Since $n \gg d$, we have $\bX \bX^\top/n \simeq \bI_d$ which is a basic property of Wishart matrices. Therefore 
\begin{align}\label{signal-part}
    \frac{1}{n} \bX \bX^\top \bs^0  \simeq \bs^0.
\end{align}
On the other hand,
\begin{align*}
    \frac{1}{n}\bX \bxi &= \frac{1}{n} \sum_{\mu=1}^n \xi_\mu \bx_\mu \\
    & \simeq \E_{\bx, z}[(\bv^\top \tilde \sigma(\bW^0 \bx) + \sqrt{\Delta} z) \bx], \quad \text{since } n \gg d  \\
    &= \E_{\bx}[(\bv^\top \tilde \sigma(\bW^0 \bx))\bx], \quad \text{since } \E[\bx z] = 0 \\
    &= \sum_{i=1}^k v_i \E[\tilde \sigma(\bw^0_i \cdot \bx) \bx].
\end{align*}
Recall the following basic property of Hermite polynomials, where
\begin{align*}
    \E[\He_{\ell}(z_1) \He_{\ell'}(z_2)] = 0
\end{align*}
for all $\ell \neq \ell'$ and $(z_1, z_2)$ jointly Gaussian, each with variance $1$. This implies $\E[\tilde \sigma(\bw^0_i \cdot \bx) x_j] = 0$, or $\E[\tilde \sigma(\bw^0_i \cdot \bx) \bx] = 0$. Therefore
\begin{align}\label{noise-part}
    \frac{1}{n} \bX \bxi \simeq 0.
\end{align}
Combining \eqref{signal-part} and \eqref{noise-part}, $\hat \bs\simeq  \bs^0$, so the linear contribution can be perfectly estimated and removed; the residual posterior on $\bW_0$
 then matches that of the model with $\tilde \sigma$ in place of $\sigma$.

\section[Properties of]{Properties of $g_\sigma$ and $m_\sigma$} \label{app:msig}
\begin{proposition} \label{prop:gm}
Let be a non-constant, differentiable function and assume $\sigma, \sigma' \in L^2(\mu)$. Then the functions $g_\sigma$ and $m_\sigma$ defined in \eqref{defs0}, \eqref{defs} satisfy the following properties:
\begin{enumerate}
    \item $g_\sigma$ has the expansion 
    \begin{align} \label{eq:gsigma-expansion}
        g_\sigma(x) = \sum_{\ell=0}^\infty \frac{\mu_\ell^2}{\ell!} x^\ell
    \end{align}
    where $\mu_\ell = \E_{z \sim \mathcal N(0,1)}[\sigma(z) \He_\ell(z)]$ is the $\ell$-th Hermite coefficient of $\sigma$. As a consequence, $g_\sigma$ is strictly increasing in $[0,1]$.
    \item $m_\sigma$ is non-decreasing, $1-m_\sigma(x) \sim c/x $ as $x \ra \infty$ for some constant $c>0$. Consequently, there exists $c_1, c_2 > 0$ such that $x(g_\sigma(1)-g_\sigma \circ m_\sigma(x)) \in [c_1, c_2]$ for all $x> \lambda_\sigma$
    \item If $k_\star(\sigma) \geq 3$, there exists $\lambda_\sigma > 0$, $q_\sigma>0$ such that $m_\sigma(\lambda) = 0$ if $\lambda < \lambda_\sigma$ and $\lim_{\lambda \ra \lambda_\sigma^+} m_\sigma(\lambda) = q_\sigma$.
    \item $g'_\sigma = g_{\sigma'}$.
\end{enumerate}
\end{proposition}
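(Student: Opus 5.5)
I will prove the four items in the order 1, 4, 2, 3, since items 2 and 3 rely on the power-series form of $g_\sigma$.

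\emph{Item 1.} Expand $\sigma$ in Hermite polynomials, \eqref{eq:hermite-expansion}, and use the Mehler-type orthogonality relation for a standard bivariate Gaussian pair with correlation $x$:
\begin{align*}
\E[\He_\ell(z_1)\He_{\ell'}(z_2)]=\delta_{\ell\ell'}\,\ell!\,x^\ell .
\end{align*}
Since $\sigma\in L^2$, $\sum_\ell \mu_\ell^2/\ell!<\infty$, so the double series converges in $L^2$ and gives \eqref{eq:gsigma-expansion} on $[0,1]$. All coefficients are nonnegative. Because $\sigma$ is non-constant, some $\mu_\ell\neq 0$ with $\ell\ge1$, so $g_\sigma$ is strictly increasing on $[0,1]$.

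\emph{Item 4.} Use the Gaussian integration by parts identity $\E[\sigma'(z)\He_\ell(z)]=\E[\sigma(z)\He_{\ell+1}(z)]$, valid since $\sigma'\in L^2$. It shows that the Hermite coefficients of $\sigma'$ are $\mu_{\ell+1}$. Then $g_{\sigma'}(x)=\sum_\ell \mu_{\ell+1}^2 x^\ell/\ell!$, and this equals the termwise derivative of \eqref{eq:gsigma-expansion}. Convergence of $g_{\sigma'}(1)=\E[\sigma'^2]$ justifies the termwise differentiation on $[0,1]$.

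\emph{Item 3.} Set $\phi_\lambda(q)=\lambda g_\sigma(q)+q+\ln(1-q)$. Subtract the constant $\lambda\mu_0^2$ from $\phi_\lambda$. If $k_\star\ge3$, then $\mu_1=\mu_2=0$, so $\lambda(g_\sigma(q)-\mu_0^2)=O(\lambda q^3)$, while
\begin{align*}
q+\ln(1-q)=-\tfrac{q^2}{2}-\tfrac{q^3}{3}-\dots\le -q^2/2 .
\end{align*}
Hence for small $\lambda$ the unique maximizer is $q=0$. To see this, bound $g_\sigma(q)-\mu_0^2\le q^3(g_\sigma(1)-\mu_0^2)$ using nonnegative coefficients and $q\le1$. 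Then
\begin{align*}
\phi_\lambda(q)-\phi_\lambda(0)\le q^2\big(\lambda C q-\tfrac12\big)<0
\end{align*}
for all $q\in(0,1)$ when $\lambda<1/(2C)$. Conversely, for $q=1/2$ say, $\phi_\lambda(1/2)-\phi_\lambda(0)\to+\infty$ as $\lambda\to\infty$, so $m_\sigma(\lambda)>0$ for large $\lambda$. Together with monotonicity (item 2), $\lambda_\sigma\in(0,\infty)$ is well defined and $m_\sigma=0$ below it.

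For the jump, I will show $\liminf_{\lambda\downarrow\lambda_\sigma} m_\sigma(\lambda)\ge q_\sigma>0$. Near $q=0$, $\phi_\lambda'(q)=-q^2/(1-q)+\lambda g'_\sigma(q)$, and $g_\sigma'(q)=O(q^2)$. More precisely, for $\lambda$ in a bounded range and all small $q\le\eta$,
\begin{align*}
\phi_\lambda(q)-\phi_\lambda(0)\le -q^2/4<0,
\end{align*}
with $\eta$ uniform in $\lambda$. So any positive maximizer must lie in $[\eta,1)$, giving $q_\sigma\ge\eta$. The limit itself exists by monotonicity. This uniform-in-$\lambda$ local bound is the main point of the argument.

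\emph{Item 2.} Monotonicity follows from a standard supermodularity argument. $\phi_\lambda(q)$ has increasing differences in $(\lambda,q)$ because $g_\sigma$ is increasing. So for $\lambda<\lambda'$ with maximizers $q$ and $q'$, adding the two optimality inequalities gives $(\lambda'-\lambda)(g_\sigma(q')-g_\sigma(q))\ge0$, hence $q'\ge q$. (Take the largest maximizer as the convention for argmax.)

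For the asymptotics, use the stationarity equation $q/(1-q)=\lambda g'_\sigma(q)$. As $\lambda\to\infty$ the maximizer tends to $1$, since $\phi_\lambda(q)\ge\lambda g_\sigma(q)+\ln(1-q)$ forces $g_\sigma(m)\to g_\sigma(1)$. Then
\begin{align*}
1-m_\sigma(\lambda)\sim \frac{1}{\lambda g_\sigma'(1)},
\end{align*}
with $c=1/g_\sigma'(1)=1/\E[\sigma'^2]>0$. Here $g'_\sigma(1)>0$ because $\sigma$ is non-constant. Consequently
\begin{align*}
x\big(g_\sigma(1)-g_\sigma(m_\sigma(x))\big)\sim x\, g_\sigma'(1)\,(1-m_\sigma(x))\to1
\end{align*}
as $x\to\infty$. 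On any compact subset of $(\lambda_\sigma,\infty)$, this quantity is continuous in the right sense and positive, since $m_\sigma<1$ and $g_\sigma$ is strictly increasing, and bounded above by $x\,g_\sigma(1)$. Near $\lambda_\sigma^+$ it is bounded below by $\lambda_\sigma\big(g_\sigma(1)-g_\sigma(1-\delta)\big)$, using that $m_\sigma$ is bounded away from $1$ there. Combining the large-$x$ limit with these compact-range bounds yields the constants $c_1,c_2$.

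The delicate points are the uniform local bound near $q=0$ in item 3, and, in item 2, ensuring that the maximizer near $1$ satisfies stationarity. The latter requires differentiability of $g_\sigma$ up to $1$, which is where $\sigma'\in L^2$ and item 4 enter.
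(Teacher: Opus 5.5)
Your proposal is correct. For items 1, 2 and 4 it follows the paper's proof closely. In item 4 you obtain the Hermite coefficients of $\sigma'$ as $\mu_{\ell+1}$ from Stein's identity rather than by differentiating the Hermite series termwise, which is the same computation. You also supply arguments for $m_\sigma(\lambda)\to1$ and for termwise differentiation up to $x=1$, which the paper only asserts.

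The genuine difference is item 3. The paper identifies the threshold variationally: $\lambda_\sigma=\inf_{q\in(0,1)}h(q)$ with $h(q)=(-q-\ln(1-q))/g_\sigma(q)$. It takes $q_\sigma$ to be the minimizer of $h$ and asserts that the right limit of $m_\sigma$ equals it. You instead argue existence directly:
\begin{itemize}
\item[] the cubic bound $g_\sigma(q)-\mu_0^2\le Cq^3$ forces $m_\sigma=0$ for $\lambda<1/(2C)$;
\item[] a test point gives $m_\sigma>0$ for large $\lambda$;
\item[] monotonicity then defines $\lambda_\sigma$ and guarantees that the right limit exists;
\item[] the same cubic bound, applied uniformly over $\lambda\le\Lambda$, rules out maximizers in $(0,\eta]$, so the limit is at least $\eta>0$.
\end{itemize}
The paper's route buys an explicit formula for $\lambda_\sigma$ and an identification of $q_\sigma$, which is what one needs to compute thresholds. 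The price is that it tacitly uses $g_\sigma(0)=\mu_0^2=0$ (otherwise the denominator of $h$ should be $g_\sigma(q)-g_\sigma(0)$). It also presupposes a unique minimizer of $h$ and leaves the limit statement unproved. Your route gives only existence, with explicit lower bounds on $\lambda_\sigma$ and $q_\sigma$, but it is complete and covers $\mu_0\neq0$.

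Two small fixes.
\begin{itemize}
\item[] The derivative is $\phi_\lambda'(q)=\lambda g_\sigma'(q)-q/(1-q)$, not $\lambda g_\sigma'(q)-q^2/(1-q)$. This is harmless: your bound $\phi_\lambda(q)-\phi_\lambda(0)\le-q^2/4$ on $(0,\eta]$ follows directly from $\phi_\lambda(q)-\phi_\lambda(0)\le q^2(\Lambda Cq-\tfrac12)$ with $\eta=\min\{1/2,1/(4\Lambda C)\}$, so no derivative is needed.
\item[] In item 2, $m_\sigma$ need not be continuous on $(\lambda_\sigma,\infty)$, so drop ``continuous in the right sense''. The bounds on $(\lambda_\sigma,M]$ come from monotonicity, $m_\sigma(x)\le m_\sigma(M)<1$, exactly as in the paper.
\end{itemize}

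Finally, a caveat shared with the paper: the lower constant $c_1$ in item 2 requires $\lambda_\sigma>0$. If $\mu_1\neq0$, then $\lambda_\sigma=0$ and $x\bigl(g_\sigma(1)-g_\sigma(m_\sigma(x))\bigr)\to0$ as $x\to0^+$. So that claim needs $\lambda_\sigma>0$, which you only establish in item 3 under $k_\star(\sigma)\ge3$. There is no circularity, since item 3 uses only the monotonicity part of item 2.
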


\begin{proof}
1. Recall \emph{Mehler's formula}: for $(z_1, z_2)$ a centered Gaussian vector with unit variances and correlation $x \in [-1,1]$,
\begin{equation}\label{eq:mehler}
\E[\He_\ell(z_1) \He_m(z_2)] = \delta_{\ell m}\, \ell!\, x^\ell, \qquad \ell, m \ge 0.
\end{equation}
Writing $\sigma = \sum_{\ell \ge 0} \frac{\mu_\ell}{\ell!} \He_\ell$ in $L^2(\mu)$ and using the orthogonality \eqref{eq:mehler}, we obtain, for every $x \in [-1,1]$,
\begin{align*}
g_\sigma(x) = \E[\sigma(z_1)\sigma(z_2)] 
= \sum_{\ell, m \ge 0} \frac{\mu_\ell \mu_m}{\ell!\, m!} \E[\He_\ell(z_1)\He_m(z_2)]
= \sum_{\ell \ge 0} \frac{\mu_\ell^2}{\ell!}\, x^\ell.
\end{align*}
Justification of the interchange: we have
\begin{align*}
\sum_{\ell, m} \frac{|\mu_\ell \mu_m|}{\ell!\, m!}\, |\E[\He_\ell(z_1)\He_m(z_2)]| = \sum_{\ell} \frac{\mu_\ell^2}{\ell!}\, |x|^\ell \le \sum_\ell \frac{\mu_\ell^2}{\ell!} = \|\sigma \|^2_{L^2(\mu)} < \infty
\end{align*}
 for $|x| \le 1$, so Fubini applies. The series \eqref{eq:gsigma-expansion} therefore converges absolutely on $[-1,1]$, and in particular at $x=1$.

2. Recall that
\begin{align*}
    m_\sigma(\lambda) = \arg \max_{q \in [0,1)} F(\lambda, q), \quad F(\lambda, q) = \lambda g_\sigma(q) + q + \ln(1-q).
\end{align*}

\textit{Monotonicity.} Consider $\lambda_1 > \lambda_2 \geq 0$. Let $q_1 = m_\sigma(\lambda_1)$, $q_2 = m_\sigma(\lambda_2)$. By optimality of $q_1$ and $q_2$,
\begin{align*}
    F(\lambda_1, q_1) \geq F(\lambda_1, q_2), \\
    F(\lambda_2, q_2) \geq F(\lambda_2, q_1).
\end{align*}
Adding the two inequalities and simplifying,
\begin{align*}
    (\lambda_1 - \lambda_2)\big(g_\sigma(q_1) - g_\sigma(q_2)\big) \ge 0.
\end{align*}
Since $\lambda_1 > \lambda_2$, this implies $g_\sigma(q_1) \geq g_\sigma(q_2)$. Since $g_\sigma$ is strictly increasing (Part~1), we have $q_1 \geq q_2$. We conclude that $m_\sigma$ is non-decreasing in $[0, \infty)$.

\textit{Asymptotic behavior.} Let $q = m_\sigma(\lambda)$. Then $q$ satisfies $q/(1-q) = \lambda g'_\sigma(q)$. When $\lambda \ra \infty$, we have $q \ra 1$. From $q = (1-q) \lambda g'_\sigma(q)$, as $\lambda \ra \infty$, we have $1 \sim (1-q) \lambda g_\sigma'(1)$, which gives $m_\sigma(\lambda) = q \sim 1 - c/\lambda$, where $c = 1/g'_\sigma(1)$.

\textit{Boundedness of $\kappa(x):= x(g_\sigma(1)-g_\sigma \circ m_\sigma(x))$ on $ (\lambda_\sigma, \infty)$}. From the asymptotic of $m_\sigma$, we have $\kappa(x) \ra 1$ as $x \ra \infty$. Fix a $\epsilon>0$. There exists $M>0$ such that $ \kappa(x) \in [1- \epsilon, 1+\epsilon] $ for all $x \geq M$. For $x \in (\lambda_\sigma, M]$, $\kappa(x)$ is bounded above by $M g_\sigma(1)$ and bounded below by $\lambda_\sigma (g_\sigma(1) - g_\sigma \circ m_\sigma(M)) > 0$.

3. We first observe that $q=0$ is always a local maximum of $F(\lambda, q)$ for every $\lambda \ge 0$. Indeed, we have
\begin{align*}
    \partial_q^2 F(\lambda, q) &= \lambda g''_\sigma(q) - \frac{1}{(1-q)^2},
\end{align*}
which gives $\partial_{q=0}^2 F(\lambda, q) =  - 1 < 0$, since $g''_\sigma(0)=0$ as $k_\star(\sigma) \geq 3$. Next, $q=0$ is the global maximum if and only if
\begin{align*}
    \lambda g_\sigma(q) + q + \ln(1-q) < 0, \quad \forall q \in (0,1),
\end{align*}
which is equivalent to
\begin{align}
    \lambda < \lambda_\sigma := \inf_{q \in (0, 1)} h(q), \quad h(q) = \frac{-q - \ln(1-q)}{g_\sigma(q)}.
\end{align}
Let $q_\sigma = \arg \inf_{q \in (0,1)} h(q)$. Since $h(q)>0$ for all $q \in (0,1)$, $\lim_{q \ra 0^+} h(q) = \lim_{q \ra 1^-} h(q) = \infty$,  it follows that $q_\sigma, \lambda_\sigma$ are well-defined and $\lambda_\sigma \in (0, \infty)$. It also follows from the definition of $q_\sigma$ that $\lim_{\lambda \ra \lambda_\sigma^+} m_\sigma(\lambda) = q_\sigma$.

4.  Expand
\[
\sigma(z)=\sum_{\ell\ge 0} a_\ell \He_\ell(z), \qquad a_\ell = \mu_\ell / \ell!.
\]
Using $\He_\ell'(z)=\ell\,\He_{\ell-1}(z)$ and the assumption $\sigma' \in L^2(\mu)$ (so the term-by-term differentiation is justified in $L^2$), we obtain
\[
\sigma'(z)=\sum_{\ell\ge 1} a_\ell \ell\, \He_{\ell-1}(z).
\]

Mehler's formula gives
\[
\E[\He_\ell(z_1)\He_m(z_2)] = \delta_{\ell m}\, \ell!\, x^\ell \quad \text{for}\quad (z_1,z_2)\sim\mathcal N\!\left(0,\begin{bmatrix}1 & x \\ x & 1\end{bmatrix}\right).
\]
Therefore,
\[
g_\sigma(x)=\E[\sigma(z_1)\sigma(z_2)]
= \sum_{\ell\ge 0} a_\ell^2\, \ell!\, x^\ell,
\]
so
\[
g'_\sigma(x)=\sum_{\ell\ge 1} a_\ell^2\, \ell!\, \ell\, x^{\ell-1}.
\]
On the other hand,
\[
g_{\sigma'}(x)
=\E[\sigma'(z_1)\sigma'(z_2)]
=\sum_{\ell\ge 1} a_\ell^2\, \ell^2\, (\ell-1)!\, x^{\ell-1}
=\sum_{\ell\ge 1} a_\ell^2\, \ell!\, \ell\, x^{\ell-1}.
\]
Hence $g'_\sigma(x)=g_{\sigma'}(x)$ for $x \in [0,1]$.
\end{proof}

 \section{Proof of features decoupling in the dense readouts case} \label{app:concentration}
 Let us prove the neuron decoupling -- first identity in \eqref{concentration-2} -- for the special case $\sqrt{k}\bv = \1$; we conjecture that this strategy with additional care extends to more general readout distributions. We have
 \begin{align*}
     \sum_{i,j=1}^k \E \langle (\bw_i^0 \cdot \bw^1_j)^2 \rangle &= \E \Big \langle \sum_{i,j=1}^k  \sum_{l,m=1}^d w^0_{il} w^1_{jl} w^0_{im} w^1_{jm} \Big \rangle \\
     &= \E \Big \langle \sum_{l,m}  [\bW^{0 \top} \bW^0]_{lm} [\bW^{1 \top} \bW^1]_{lm} \Big \rangle \\
     &= \E \langle \tr( \bW^{0 \top} \bW^0 \bW^{1 \top} \bW^1) \rangle \\
     &\leq \sqrt{ \E \langle \| \bW^{0 \top} \bW^0 \|_F^2 \rangle \E \langle \| \bW^{1 \top} \bW^1 \|_F^2 \rangle } \\
     &= \E \| \bW^{0 \top} \bW^0 \|_F^2 =  \E \| \bW^{0} \bW^{0 \top} \|_F^2 = k.
 \end{align*}
 This sum contains $k$ equal terms of order $O(1)$ and other $k(k-1)$ ``off-diagonal'' terms that are equal by statistical exchangeability. This implies that the off-diagonal terms are of order $O(k^{-1}) = O(d^{-1})$ as claimed.

\section{More details on the derivation of the main result} \label{app:main-res}

We fill in some details of the derivation of the main result.  First, we show that
\begin{align}\label{eBO}
\varepsilon^{\rm BO} \sim \sum_{i=1}^k v_i^2 \bigl(g_\sigma(1) - g_\sigma(q_i)\bigr)
\end{align}
holds in the dense readout, interpolation regime, while leaving a complete proof for future work.

Since the BO error is half of Gibbs error, we have
\begin{align*}
    \varepsilon^{\rm BO} &= \frac{1}{2} \E_{\mathcal D, \bW, \bW^1} \E_{\bx \sim \mathcal N(0, \bI_d)} [(y_1 - y_0)^2] 
\end{align*}
where
\begin{align*}
    y_0  = \bv^\top \sigma(\bW^0 \bx), \quad y_1  = \bv^\top \sigma(\bW^1 \bx)
\end{align*}
and $\bW^1$ is drawn from the posterior $P(\bW \mid \mathcal D)$. We have
\begin{align*}
    \E_{\bx} [(y_1 - y_0)^2] &= \E_{\bx}[y_0^2] + \E_\bx [y_1^2] - 2 \E[y_0 y_1] \\
    \E_{\bx}[y_1^2] = \E_{\bx}[y_0^2] &= \sum_{i,j=1}^k v_i v_j \E_{\bx} [\sigma(\bw^0_i \cdot \bx) \sigma(\bw_j^0 \cdot \bx)] 
    = \sum_{i,j=1}^k v_i v_j g_\sigma(\bw^0_i \cdot \bw^0_j) = g_\sigma(1) \\
    \E_{\bx} [y_0 y_1] &= \sum_{i,j=1}^k v_i v_j \E_{\bx} [\sigma(\bw^0_i \cdot \bx) \sigma(\bw_j^1 \cdot \bx)]
    = \sum_{i,j=1}^k v_i v_j g_\sigma(\bw^0_i\cdot \bw^1_j).
\end{align*}
Sue to the concentration of $\E_{\bx}[(y_0-y_1)^2]$ with respect to $(\mathcal D, \bW^0, \bW^1)$, and since $B$ is negligible compared to $A$, we have
\begin{align*}
    \varepsilon^{\rm BO} \sim \frac{1}{2} \E_{\bx}[(y_0-y_1)^2] = \underbrace{\sum_{i=1}^k v_i^2 (g_\sigma(1) - g_\sigma(\bw^0_i \cdot \bw^1_i))}_{A} - \underbrace{\sum_{i \neq j} v_i v_j g_\sigma(\bw^0_i \cdot \bw^1_j)}_{B}
\end{align*}

It suffices to show that $B$ is negligible compared to $\varepsilon^{\rm BO}$, since this implies
\begin{align*}
    \varepsilon^{\rm BO} \sim A \sim \sum_{i=1}^k v_i^2 (g_\sigma(1)-g_\sigma(q_i))
\end{align*}
Since $g_\sigma(x) = O(x^3)$ when $k_\star(\sigma) \geq 3$, each $g_\sigma(\bw^0_j \cdot \bw^1_j)$, with $i \neq j$, is of order $d^{-3/2}$. Moreover, when $\bv$ is dense, the entries $v_i$ are of order $k^{-1/2}$, it follows that $B = O(k d^{-3/2}) = O(d^{-1/2})$. In the dense readout, interpolation regime, $\varepsilon^{\rm BO} = \Theta(1)$. We conclude $B \ll \varepsilon^{\rm BO}$.

The derivation for the variance of $R_{\mu i}$ follows verbatim, since $R_{\mu i}$ only differs from $\bv^\top \sigma(\bW \bx_\mu)$ by a single term $v_i \sigma(\bw^0_i \cdot \bx_\mu)$.

For general case, we give here some insights that may help to prove \eqref{eBO}. Let us denote $Q_{ij}:= \bw^0_i \cdot \bw^1_j$.
\begin{enumerate}
    \item $\sum_{i} Q_{ij}^2 = \sum_j Q_{ij}^2 = 1 $. This identity is useful for bounding $B$ when $n$ is very large, as in this regime the overlaps $Q_{ii} \simeq q_i$ are close to $1$, which suppresses the off-diagonal terms.
    \item Experimentally, we observe that $Q_{ij}$ for $i \neq j$ are \textit{asymptotically uncorrelated}, i.e.
    \begin{align*}
        \var(B) \lesssim \sum_{i \neq j} v_i^2 v_j^2 \var( Q_{ij}^3 ),
    \end{align*}
    This improves upon the pessimistic upper bound we used in the dense readout, interpolation regime.
\end{enumerate}


\section[Optimal scaling laws for]{Optimal scaling laws for $\beta>1/2$} \label{app:scaling}
Recall that the effective SNR for feature $i$ is
\begin{align*}
    \snr_i = \frac{(n/d) v_i^2}{\Delta + \varepsilon^{\rm BO}}.
\end{align*}
Consequently, the weakest feature (indexed by $k$) is learnable whenever $\snr_k > \lambda_\sigma$, which in turn requires $n \gtrsim k^{2\beta} d$. Using $k^{2\beta} d$ as the threshold, we distinguish two regimes for $n$, in addition to the standing assumption that $n \gg d$:

\begin{itemize}
\item When $ n \ll k^{2 \beta} d$, the weakest feature is not learnable, so $k_c < k$. 
From
\begin{align*}
\snr_{k_c}>\lambda_\sigma>\snr_{k_c + 1} \quad \text{i.e.} \quad  \frac{(n/d) v_{k_c}^2}{\Delta + \varepsilon^{\rm BO}} > \lambda_\sigma > \frac{(n/d) v_{k_c + 1}^2}{\Delta + \varepsilon^{\rm BO}},
\end{align*}
we obtain $k_c = \Theta((n/d)^{1/(2 \beta)})$, which implies $1 \ll k_c \ll k$. Recall from Result \ref{res:main},
\begin{align*}
    \varepsilon^{\rm BO} = \Theta\Big( \frac{k_c d}{n} \Big) + \Theta \Big( \sum_{i>k_c} v_i^2 \Big).
\end{align*}
From the asymptotics of $k_c$, the first term in this decomposition is $\Theta((n/d)^{1/(2 \beta)-1})$. On the other hand, 
\begin{align*}
    \sum_{i > k_c} v_i^2 &= \Theta \Big( \sum_{i > k_c} i^{-2\beta} \Big)
    = \Theta \Big( \int_{k_c}^k x^{-2 \beta} \, dx \Big) \\
    &= \Theta \Big( k_c^{1-2\beta} - k^{1-2\beta} \Big) 
    = \Theta \Big( k_c^{1-2\beta} \Big) \\
    &= \Theta((n/d)^{1/(2 \beta)-1}).
\end{align*}
Thus, the second term in the decomposition is comparable to the first one. We conclude $\varepsilon^{\rm BO} = \Theta((n/d)^{1/(2 \beta)-1})$, which scales as $n^{\frac{1}{2\beta} - 1}$ when $k, d$ are fixed.

\item When $n \gg k^{2 \beta} d$, all the features are learnable: $k_c=k$. The second term in the error decomposition vanishes, so $\varepsilon^{\rm BO} = \Theta(k d/n)$, which scales as $n^{-1}$ for fixed $k,d$.
\end{itemize}

It is worth noting from the preceding argument that when $d \ll n \ll k^{2\beta} d$, the first term in the error decomposition is comparable to the second. As $n$ increases, additional features become learnable, causing the first term to grow while the second decreases. Consequently, as long as $n \gg d$, the second term is either comparable to or negligible relative to the first. Therefore, the asymptotics for Bayes-optimal error simplifies to $\Theta(k_c d / n)$ for all $n \gg d$.

\section{Extensive-width, near-interpolation regime}\label{app:d-asymp}
We derive here the results in the table of Section~\ref{sec:discussion}. Recall that in the interpolation regime, we have $k \sim \gamma d$ and $n \sim \alpha d^2$, where $\gamma, \alpha > 0$ are fixed parameters.
 
\textbf{Ultra-sparse readout.} In this case, the vector $\bv$ has only finitely many nonzero components. Assume $v_1 \geq \dots \geq v_p > 0$ with $p = \Theta(1)$, while $v_{p+1} = \dots = v_k = 0$. For $i \leq p$, recall that the effective signal-to-noise ratio of $\bw_i$ is $\snr_i = (n/d) v_i^2 / (\Delta + \varepsilon^{\rm BO})$. Since $v_i = \Theta(1)$ and $\Delta = \Theta(1)$, it follows that $\snr_i \gg 1$. Therefore, feature $i$ is learnable for every $i \leq p$, implying $k_c = p = \Theta(1)$. Using \eqref{eq:err-asymp}, we obtain $\varepsilon^{\rm BO} = \Theta(k_c d / n) = \Theta(d / n) = \Theta(d^{-1})$.

For the remaining readout cases (dense, power-law, and exponentially decayed), recall that when $k_c < k$,
\begin{align*}
    \frac{(n/d) v_{k_c}^2}{\Delta + \varepsilon^{\rm BO}} > \lambda_\sigma > \frac{(n/d) v_{k_c + 1}^2}{\Delta + \varepsilon^{\rm BO}}
\end{align*}
In each of these cases, consecutive readout coefficients satisfy $v_i/v_{i+1}=\Theta(1)$. Combined with $\Delta = \Theta(1)$, the above inequality yields
\begin{align}\label{eq:kc-asym}
    v_{k_c}^2 = \Theta \Big( \frac{d}{n} \Big) = \Theta \Big( \frac{1}{d} \Big)
\end{align}
Now let us deal with each case

\textbf{Power-law, $\beta>1/2$}. In this case $v_i^2 = \Theta(i^{-2\beta})$. Combined with \eqref{eq:kc-asym}, this gives $k_c = \Theta(d^{\frac{1}{2\beta}})$. From Result \ref{res:main},
\begin{align}\label{eq:err-decom}
    \varepsilon^{\rm BO} = \Theta\Big( \frac{k_c d}{n} \Big) + \Theta \Big( \sum_{i>k_c} v_i^2 \Big).
\end{align}
By the asymptotics of $k_c$, both terms on the right-hand side are $\Theta(d^{\frac{1}{2\beta}-1})$, and we conclude that $\varepsilon^{\rm BO} = \Theta(d^{\frac{1}{2\beta}-1})$.

\textbf{Power-law, $\beta < 1/2$}. We have $v_i^2 = \Theta(i^{-2\beta} k^{2\beta -1} )$. Combined with \eqref{eq:kc-asym}, this gives $k_c = \Theta(d)$, so the first term in \eqref{eq:err-decom} is $\Theta(1)$ and the second term is $O(1)$. We conclude $\varepsilon^{\rm BO}= \Theta(1)$.

\textbf{Power-law, $\beta=1/2$}. We have $v_i^2 = \Theta(i^{-1}/\ln k )$. Combined with \eqref{eq:kc-asym}, this gives $k_c = \Theta(d/\ln d)$, so the first term in \eqref{eq:err-decom} is $\Theta(\frac{1}{\ln d})$ while the second term is $\Theta(\frac{\ln \ln d}{\ln d})$. We conclude $ \varepsilon^{\rm BO} = \Theta(\frac{\ln \ln d}{\ln d})$.

\textbf{Exponentially decaying.} We have $v_i = \Theta (e^{-ci})$ for some constant $c>0$. Combined with \eqref{eq:kc-asym}, this gives $k_c = \Theta(\ln d)$. The first term in \eqref{eq:err-decom} dominates, and we conclude $\varepsilon^{\rm BO} = \Theta(\frac{\ln d}{d})$.

\section{Additional simulations}
The plots of $m_\sigma(\lambda)$ for three different activation functions, shown in Figure~\ref{fig:glm}, exhibit qualitatively distinct behaviors: absence of a phase transition, a continuous phase transition, and a sharp (discontinuous) phase transition. For $k_\star(\sigma) \geq 3$, which is the relevant regime for the model (\ref{eq:model}), the sharp phase transition at $\lambda_\sigma$ induces a sharp cutoff in the learning of features.

\begin{figure}[t]
  \centering
  \subfigure{\includegraphics[width=0.30\linewidth]{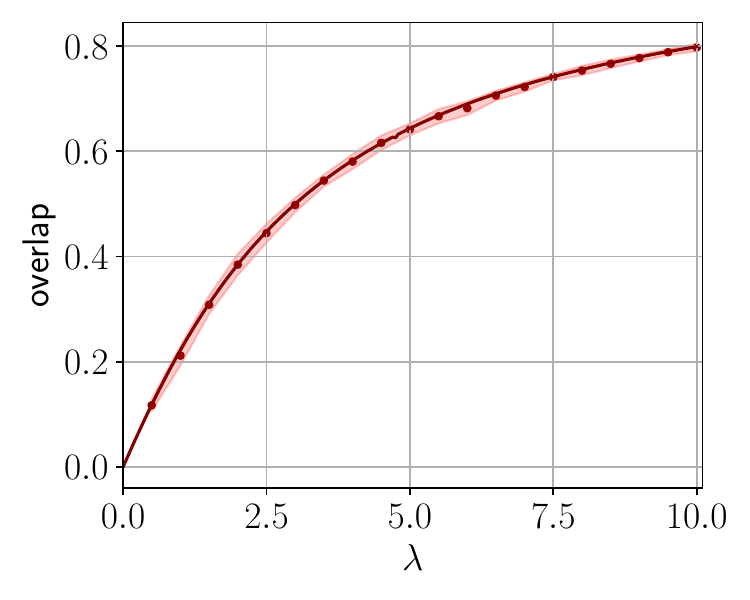}}
  \subfigure{\includegraphics[width=0.30\linewidth]{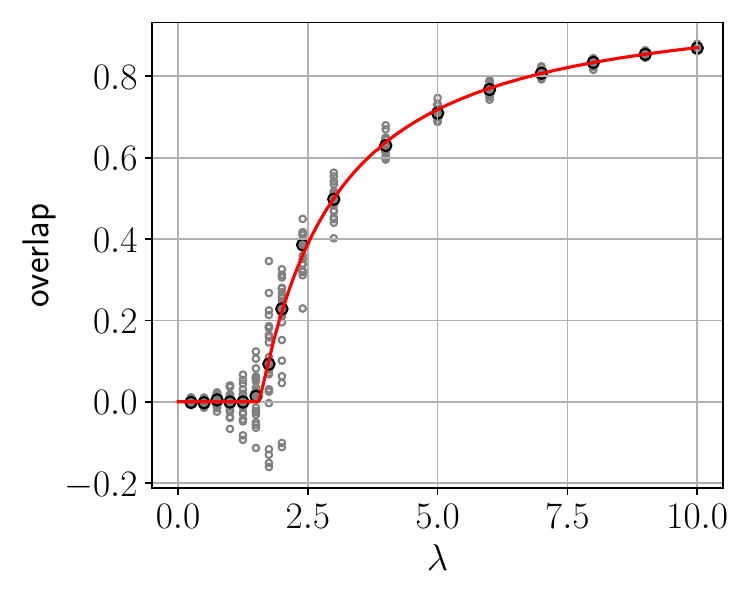}}
  \subfigure{\includegraphics[width=0.30\linewidth]{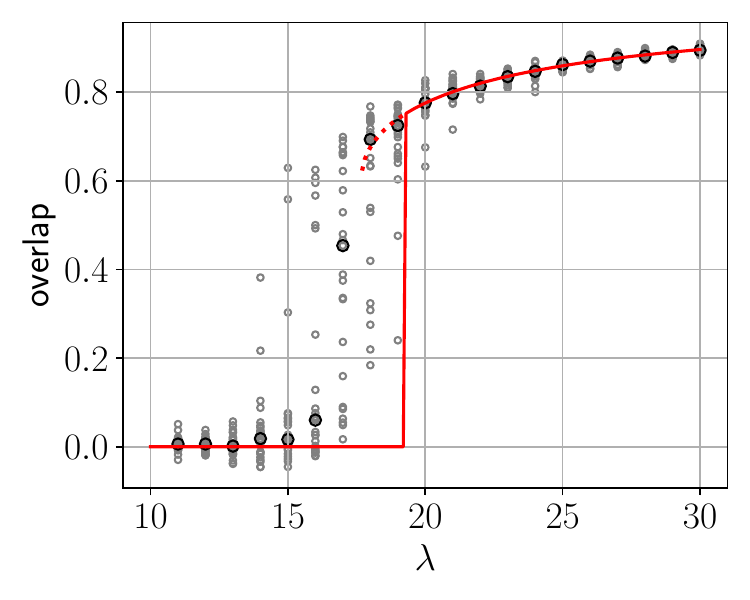}}
  \caption{Bayes-optimal overlap as a function of the SNR $\lambda$ in the generalized linear model in the high-sampling regime of Result~\ref{res:glm-highnoise}: comparison between theoretical predictions and HMC simulations initialized on the signal. The experimental data show the overlap between HMC sample and the signal, over $25$ independent experiments. The dashed line indicates the metastable solution corresponding to the local maximum near $1$ of the function $\lambda g_\sigma(q) + q + \ln(1-q)$, see the discussion concerning metastable solutions in \cite{barbier2025statistical}. From left to right: activations $\text{ReLU}(x)$, $|x|$ and $\tanh(2x)-c x$, with $c=\E_{z \sim \mathcal N(0,1)} \, z \tanh(2z)$. \textit{Common setting:} $d=1000$, $\Delta=40$.}
  \label{fig:glm}
\end{figure}

\begin{figure}[!t]
    \centering
	\subfigure{\includegraphics[width=0.32\linewidth]{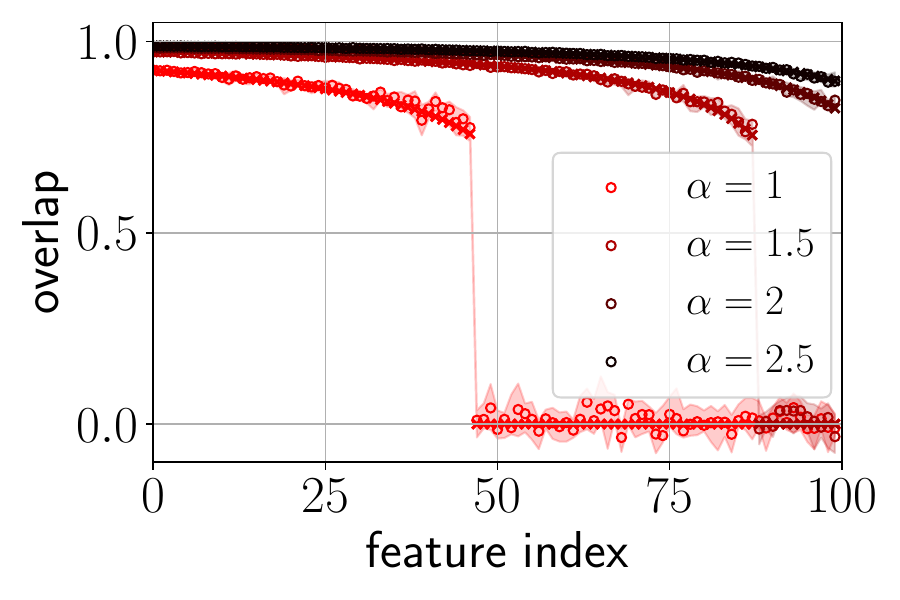}}
    \subfigure{\includegraphics[width=0.32\linewidth]{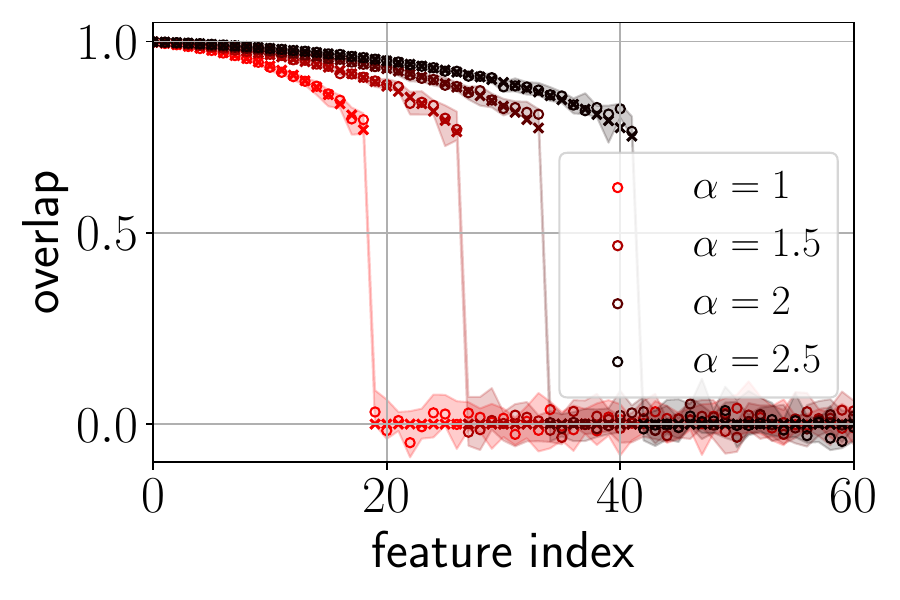}}
    \subfigure{\includegraphics[width=0.32\linewidth]{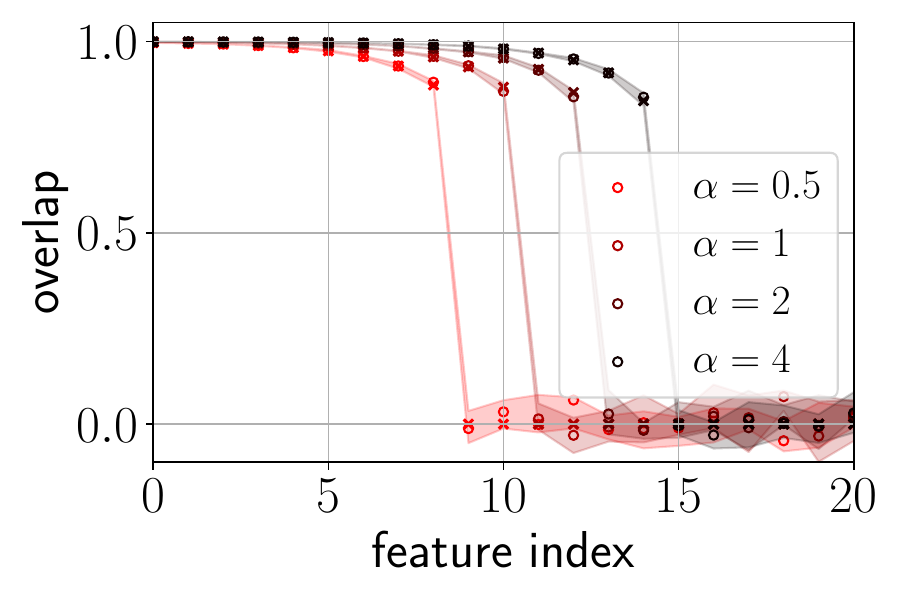}}
    \caption{Bayes-optimal feature overlaps in the neural network model \eqref{eq:model} for different types of readouts. The crosses show the theoretical prediction, while the circles show experimental results. The circles correspond to the mean overlap computed over 16 independent experiments, and the shaded region indicates the standard deviations. \textbf{Left:} dense readout with $P_v \sim \mathcal{U}([2, 5])/\sqrt{13}$. \textbf{Middle:} power-law readout with $\beta = 0.7$. \textbf{Right:} exponentially decayed readout, with $v_j \propto e^{-j/5}$. \emph{Setting:} $\sigma(x) = \tanh(2x)$, $d=200$, $k=100$, $\Delta = 0.04$.}
	\label{fig:overlaps}
\end{figure}


\section{Some exact results in the near-interpolation regime} \label{app:exact}
In the near-interpolation regime, certain exact results, such as the leading constants in the asymptotic behavior and the locations of phase transitions, can be computed explicitly as consequences of Result~\ref{res:main}. Although these results are not the primary focus of this paper, they may be of independent interest to some readers. We collect them here.

\begin{result} \label{app:dense}
Consider the near-interpolation regime with a dense readout $\bv$, such that the empirical distribution of the entries of $\sqrt{k},\bv$ converges to a limiting distribution $P_v$ in the infinite-dimensional limit. Define \begin{align}
	\varepsilon &= \lim_{d \ra \infty} \varepsilon^{\rm BO} \\
    \mathcal Q(\va) &= \lim_{\delta \ra 0} \lim_{d \ra \infty} \frac{1}{|I_{\va, \delta}|}  \sum_{ i \in I_{\va, \delta} } q_{i}
\end{align}
where $I_{\va, \delta} = \{ i \in [k]: \sqrt{k}v_i \in [\va - \delta, \va + \delta] \}$. Then we have
\begin{enumerate}
\item $(\varepsilon, \mathcal Q(\va))$ satisfies the fixed-point equations
\begin{align}
\mathcal Q(\va) &=  m_\sigma \Big( \frac{\alpha \va^2}{\gamma(\Delta + \varepsilon)} \Big)\\
\varepsilon &= g_\sigma(1)-\E_{\va \sim P_v} [\va^2 g_\sigma(\mathcal Q(\va))].
\end{align}

\item Let $\alpha_c(\va ) = \sup \{ \alpha \geq 0: \mathcal Q(\va)=0 \}$, then
\begin{align*}
    \alpha_c(\va)  &= \frac{1}{\va^2} \lambda_\sigma \gamma \Big(\Delta + g_\sigma(1) - \E_{v \sim P_v} \Big[v^2 g_\sigma \circ m_\sigma \Big( \frac{\lambda_\sigma v^2}{\va^2} \Big) \Big] \Big). 
\end{align*}
In particular, if $\max_i v_i$ converges to a limiting value $\va_{\max}$, then the first phase transition--corresponding to the feature with the largest readout--occurs at
\begin{align}
 \alpha =  \frac{\lambda_\sigma \gamma (\Delta + g_\sigma(1))}{\va_{\max}^2}.
\end{align}
\end{enumerate}
\end{result}

\begin{proof}
1. Fix $i \in I_{\va, \delta}$, we have
\begin{align*}
    \frac{(\va - \delta)^2}{k} \leq v_i^2 \leq \frac{(\va + \delta)^2}{k}
\end{align*}
Therefore,
\begin{align*}
    \frac{n(\va - \delta)^2}{k d (\Delta + \varepsilon)} \leq \frac{nv_i^2}{d(\Delta + \varepsilon)} \leq \frac{n(\va + \delta)^2}{kd (\Delta + \varepsilon)}
\end{align*}
Applying the non-decreasing function $m_\sigma$ on all sides of the previous inequality, we have
\begin{align*}
    m_\sigma \Big( \frac{n(\va - \delta)^2}{k d (\Delta + \varepsilon)} \Big) \leq q_{i} \leq m_\sigma \Big( \frac{n(\va + \delta)^2}{k d (\Delta + \varepsilon)} \Big).
\end{align*}
The last inequality holds for any $i \in I_{\va, \delta}$, which implies
\begin{align*}
    m_\sigma \Big( \frac{n(\va - \delta)^2}{k d (\Delta + \varepsilon)} \Big) \leq \frac{1}{|I_{\va, \delta}|} \sum_{i \in I_{\va, \delta}} q_{i} \leq m_\sigma \Big( \frac{n(\va + \delta)^2}{k d (\Delta + \varepsilon)} \Big).
\end{align*}
Taking $d \ra \infty$ in the last inequalities, we obtain
\begin{align*}
    m_\sigma \Big( \frac{\alpha(\va - \delta)^2}{\gamma (\Delta + \varepsilon)} \Big) \leq \lim_{d \ra \infty} \frac{1}{|I_{\va, \delta}|} \sum_{i \in I_{\va, \delta}} q_{i} \leq m_\sigma \Big( \frac{\alpha(\va + \delta)^2}{ \gamma(\Delta + \varepsilon)} \Big).
\end{align*}
Taking $\delta \ra 0$, we have
\begin{align} \label{eq:fpe-dense-1}
    \mathcal Q(\va) = m_\sigma \Big( \frac{\alpha \va^2}{\gamma (\Delta + \varepsilon)} \Big)
\end{align}

\begin{align*}
    \varepsilon^{\rm BO} \sim \sum_{i=1}^k v_i^2 (g_\sigma(1) - g_\sigma(q_{i})),
\end{align*}
taking the limit $d \ra \infty$, and using a similar argument in the derivation of \eqref{eq:fpe-dense-1}, we obtain
\begin{align}
    \varepsilon = g_\sigma(1) - \E_{\va \sim P_v}[\va^2 g_\sigma(\mathcal Q(\va))]
\end{align}

2. From the definition of $\alpha_c(\va)$ and equation \eqref{eq:fpe-dense-1}, using the fact that $\lambda_\sigma = \sup \{ \lambda \geq 0: m_\sigma(\lambda) = 0 \} $, we have 
\begin{align*}
    \frac{\alpha_c(\va) \va^2}{\Delta + \varepsilon} = \lambda_\sigma.
\end{align*}
Therefore,
\begin{align*}
    \alpha_c(\va) &= \frac{1}{\va^2} \lambda_\sigma \gamma (\Delta + \varepsilon) \\
    &= \frac{1}{\va^2} \lambda_\sigma \gamma (\Delta + g_\sigma(1) - \E_{v \sim P_v}[v^2 g_\sigma \circ \mathcal Q(v)]) \\
    &= \frac{1}{\va^2} \lambda_\sigma \gamma \Big(\Delta + g_\sigma(1) - \E_{v \sim P_v} \Big[v^2 g_\sigma \circ m_\sigma \Big( \frac{\alpha_c(\va) v^2}{\gamma(\Delta +\varepsilon)} \Big) \Big] \Big) \\
    &= \frac{1}{\va^2} \lambda_\sigma \gamma \Big(\Delta + g_\sigma(1) - \E_{v \sim P_v} \Big[v^2 g_\sigma \circ m_\sigma \Big( \frac{\lambda_\sigma v^2}{\va^2} \Big) \Big] \Big) 
\end{align*}
If $\va = \va_{\max}$, then for any $v \in \supp(P_v)$, we have $\lambda_\sigma v^2/\va^2 \leq \lambda_\sigma$, so $m_\sigma(\lambda_\sigma v^2/\va^2 ) = 0$. The formula simplifies to 
\begin{align*}
    \alpha_c(\va_{\max}) = \frac{\lambda_\sigma \gamma (\Delta + g_\sigma(1))}{\va_{\max}^2}.
\end{align*}
\end{proof}

\begin{result}\label{res:scaling}
Consider the near-interpolation regime with power-law readouts with exponent $\beta$, i.e. 
\begin{align*}
    v_i^2 = \frac{i^{-2\beta}}{\sum_{j=1}^k j^{-2\beta}}, \quad i \in [k].
\end{align*}
\begin{enumerate}
\item If $k_c < k$ and $i = \floor{x k_c}$ for some $x \in (0,1)$, then
\begin{align}
    \lim_{d \ra \infty }q_{i} =  m_\sigma(\lambda_\sigma x^{-2\beta}):= \phi (x)
\end{align}
In other words, the overlaps $(q_i)_{i=1}^k$ in the infinite dimension limit can be compactly described by a limiting profile $\phi$.

\item For $\beta > 1/2$, let
\begin{align}
    \bar k_c = \lim_{d \ra \infty} k_c d^{-\frac{1}{2\beta}}, \quad \bar \varepsilon^{\rm BO} = \lim_{d \ra \infty} \varepsilon^{\rm BO} d^{ -\frac{1}{2\beta} + 1 },
\end{align}
Then, letting $z(\beta) := \sum_{i=1}^\infty i^{-2\beta}$ and since $\varepsilon=0$,
\begin{align}
    \bar k_c &= \left( \frac{\alpha}{\lambda_\sigma z(\beta) \Delta }\right)^{\frac{1}{2\beta}} \\
    \bar \varepsilon^{\rm BO} &= \frac{\bar{k}_c^{1-2\beta}}{z(\beta)} \left( \int_0^1 \frac{g_\sigma(1)-g_\sigma(\phi(x))}{x^{2\beta}} dx + \frac{g_\sigma(1)}{(2\beta -1) } \right).
\end{align}

\item For $\beta < 1/2$, let
\begin{align}
    \bar k_c = \lim_{d \ra \infty} k_c/k, \quad \varepsilon = \lim_{d \ra \infty} \varepsilon^{\rm BO}.
\end{align}
Then $\varepsilon$ is the unique solution in $(0, \infty)$ of 
\begin{align}
    \frac{g_\sigma(1) - \varepsilon}{1-2\beta} = \int_0^1 x^{-2\beta} g_\sigma \circ m_\sigma \Big( \frac{\alpha(1-2\beta)}{\gamma(\Delta + \varepsilon)} x^{-2\beta} \Big)dx
\end{align}
and
\begin{align}
    \bar k_c = \Big(\frac{(1-2\beta) \alpha}{\lambda_\sigma \gamma (\Delta + \varepsilon)} \Big)^{\frac{1}{2\beta}}
\end{align}
Moreover, in the thermodynamic limit, all features are learnable if and only if $\alpha > \alpha_\star$, where
\begin{align}
   \frac{\alpha_\star}{\lambda_\sigma \gamma} = \frac{ \Delta + g_\sigma(1)}{1-2\beta} - \int_0^1 \frac{g_\sigma(\phi (x))}{x^{2\beta}}\, dx,
\end{align}

\end{enumerate}
\end{result}

\textit{Derivation.} 1. We have
\begin{align}\label{eq:threshold}
\frac{n v_{k_c+1}^2}{d(\Delta + \varepsilon^{\rm BO})}
< \lambda_\sigma <
\frac{n v_{k_c}^2}{d(\Delta + \varepsilon^{\rm BO})} .
\end{align}
From the inequality on the left, it is clear that $k_c \to \infty$ as $d \to \infty$. This implies $\lim_{d \ra \infty} v_{k_c}/v_{k_c+1}=1$, which yields
\begin{align}\label{eq:threshold-lim}
    \lim_{d \ra \infty} \frac{n v_{k_c}^2}{d(\Delta + \varepsilon^{\rm BO})}  = \lambda_\sigma.
\end{align}
Next, let $x \in (0,1)$ and $i = \floor{x k_c}$. Recall that
\begin{align}\label{eq:proof-scaling-1}
q_{i} \simeq m_\sigma \Big( \frac{n v_i^2}{ d(\Delta + \varepsilon^{\rm BO}) } \Big)
\end{align}

Sending $d \ra \infty$ on both sides of (\ref{eq:proof-scaling-1}), and using the fact that
\begin{align}
    \lim_{d \ra \infty} \frac{n v_i^2}{ d(\Delta + \varepsilon^{\rm BO} ) } = \lim_{d \ra \infty} \frac{n v_{k_c}^2}{ d(\Delta + \varepsilon^{\rm BO} )} \frac{v_i^2}{v_{k_c}^2} = \lambda_\sigma x^{-2\beta},
\end{align}
we obtain
\begin{align}
    \lim_{d \ra \infty} q_{i} = m_\sigma(\lambda_\sigma x^{-2\beta})  = \phi (x)
\end{align}

2. From~\eqref{eq:threshold-lim},
straightforward calculations yield
\begin{equation}
    \lim_{d \ra \infty} k_c \, d^{-\frac{1}{2\beta}} = \left( \frac{\alpha}{\lambda_\sigma z(\beta) (\Delta + \varepsilon) }\right)^{\frac{1}{2\beta}}.
\end{equation}

Next, using $q_{i} \simeq \phi(i/k_c)$ for $i \leq k_c$, the error contributed by learnable features is
\begin{align}
&\sum_{i \leq k_c} \frac{i^{-2\beta}}{z(\beta)} 
(g_\sigma(1)-g_\sigma \circ \phi(i/k_c)) \nonumber \\
&= k_c^{1-2\beta} \frac{1}{k_c} \sum_{i \leq k_c}  \frac{(i/k_c)^{-2\beta}}{z(\beta)} (g_\sigma(1)-g_\sigma \circ \phi(i/k_c)) \\
&\simeq k_c^{1-2\beta} \int_0^1 \frac{x^{-2\beta}}{z(\beta)} (g_\sigma(1) - g_\sigma(\phi (x))) dx.
\end{align}
where in the last step, averaged sum over $i \leq k_c$ is asymptotically the integral over $[0,1]$.

On the other hand, since $q_i=0$ as $i>k_c$, the error that comes from non-learnable features is
\begin{align}
\sum_{i>k_c} v_i^2 g_\sigma(1) \simeq g_\sigma(1) \sum_{i>k_c} \frac{i^{-2\beta}}{z(\beta)} 
\simeq \frac{g_\sigma(1)}{(2\beta-1) z(\beta)} k_c^{1-2\beta}
\end{align}
This implies $\varepsilon^{\rm BO} \propto d^{\frac{1}{2\beta}-1}$, so $\varepsilon = \lim_{d \ra \infty} \varepsilon^{\rm BO} = 0$. Plug this value of $\varepsilon$ to the previous formula, we obtain the results for $\beta > 1/2$.

3. We have
\begin{align}
    v_i^2 = \frac{i^{-2 \beta}}{\sum_{j=1}^k j^{-2 \beta}} \simeq (1-2\beta) i^{-2\beta} k^{2 \beta -1}
\end{align}

From~\eqref{eq:threshold-lim}, straightforward calculations yield
\begin{align}
    \lim_{d \ra \infty} k_c/k = \Big(\frac{(1-2\beta) \alpha}{\lambda_\sigma \gamma (\Delta + \varepsilon)} \Big)^{\frac{1}{2\beta}}
\end{align}
In this case it is possible for all features to be learnable. Let us denote 
\begin{align}
    \alpha_\star = \lim_{d \ra \infty} \alpha_{c}(k)
\end{align}
where we recall that $\alpha_{c}(k)$ is the threshold of $\alpha$ for the feature $k$ to be learnable. Let $\varepsilon$ be the BO error at this value of $\alpha$. We have
\begin{align*}
    \alpha_c(k) &\simeq \frac{\lambda_\sigma (\Delta + \varepsilon)}{d v_k^2} \\
    &= \frac{\lambda_\sigma}{dv_k^2} \Big( \Delta + g_\sigma(1) - \sum_{i=1}^k v_i^2 g_\sigma(q_{i}) \Big) \\
    &= \frac{\lambda_\sigma}{dv_k^2} \Big( \Delta + g_\sigma(1) - \sum_{i=1}^k v_i^2 g_\sigma \circ m_\sigma \Big( \frac{\alpha_c(k) d v_i^2}{\Delta + \varepsilon} \Big) \Big) \\
    &= \frac{\lambda_\sigma}{dv_k^2} \Big( \Delta + g_\sigma(1) - \sum_{i=1}^k v_i^2 g_\sigma \circ m_\sigma \Big( \lambda_\sigma \frac{v_i^2}{v_k^2} \Big) \Big) 
\end{align*}
Using the fact that $v_k^2 \sim (1-2\beta)/k$, $v_i^2/v_k^2 = (i/k)^{-2\beta}$ and $v_i^2 \sim (1-2\beta) i^{-2\beta} k^{2\beta-1}$, the last expression can be written as a Riemann sum, which in the limit $d \ra \infty$ gives
\begin{align}
   \alpha_\star = \frac{\lambda_\sigma \gamma (\Delta + g_\sigma(1))}{1-2\beta} - \lambda_\sigma \gamma \int_0^1 \frac{g_\sigma(\phi (x))}{x^{2\beta}}\, dx,
\end{align}

Let us derive the fixed-point equation for $\varepsilon$. We have
\begin{align*}
    \varepsilon &= g_\sigma(1) - \sum_{i=1}^k v_i^2 g_\sigma(q_{i}) \\
    &\sim g_\sigma(1) - (1-2\beta)k^{2\beta -1} \sum_{i=1}^k i^{-2\beta} g_\sigma \circ m_\sigma \Big( \frac{\alpha d v_i^2}{\Delta + \varepsilon} \Big) \\
    &\sim g_\sigma(1) - (1-2\beta)k^{2\beta -1} \sum_{i=1}^k i^{-2\beta} g_\sigma \circ m_\sigma \Big( \frac{\alpha(1-2\beta)}{\gamma(\Delta + \varepsilon)} (i/k)^{-2\beta} \Big) \\
    &\sim g_\sigma(1) - (1-2\beta) \frac{1}{k} \sum_{i=1}^k (i/k)^{-2\beta} g_\sigma \circ m_\sigma \Big( \frac{\alpha(1-2\beta)}{\gamma(\Delta + \varepsilon)} (i/k)^{-2\beta} \Big) \\
    &\sim g_\sigma(1) - (1-2\beta) \int_0^1 x^{-2\beta} g_\sigma \circ m_\sigma \Big( \frac{\alpha(1-2\beta)}{\gamma(\Delta + \varepsilon)} x^{-2\beta} \Big)dx
\end{align*}
which gives the fixed point equation for $\varepsilon$.

\begin{remark}
The following properties of the limiting profile $\phi (x) = m_\sigma(\lambda_\sigma x^{-2\beta})$ can be derived from properties of $m_\sigma$ in Appendix \ref{app:msig}.
\begin{enumerate}
\item $\phi (x)$ is non-increasing
\item $\lim_{x \ra 0} \phi (x)=1$, $\lim_{x \ra 1} \phi (x) = q_\sigma$.
\item $\phi (x) = 1 - cx^{2\beta} + o(x^{2 \beta})$ as $x \ra 0^+$, for some $c>0$
\end{enumerate}
\end{remark}

\section{Numerical methods}

\subsection{Solving the fixed point equations}\label{app:fpe}
From the system of fixed point equations, we obtain the following equation for $\varepsilon$:
\begin{align}\label{eq:fixedPointUnique}
    \varepsilon = g_\sigma(1) - \sum_{i=1}^k v_i^2 g_\sigma \circ m_\sigma \Big( \frac{nv_i^2/d}{\Delta + \varepsilon} \Big).
\end{align}
This can be quickly solved by a standard numerical method such as bisection.

\subsection{HMC simulation}
The theoretical predictions for the overlaps are validated by HMC simulation. In general, sampling from the posterior is difficult since the posterior landscape can be extremely complicated: with random initialization, it can take exponentially long to reach the true equilibrium. However, if the initialization lies in the same basin of attraction as the equilibrium state, convergence is achieved with reasonable computational cost. We initialize such that the first $k_c$ rows coincide with the ground truth, and the remaining rows are random. Note that the HMC does not validate the prediction of $k_c$ itself.

\subsection{Numerical evaluation of Bayes-optimal error}\label{app:gibbs}
Directly evaluating the Bayes-optimal error from its definition is computationally expensive. Indeed, computing $\E[y(\bx, \bW \mid \mathcal D)]$ requires drawing many independent samples from the posterior distribution $P(\bW \mid \mathcal D)$, and each sample requires a computational time on the order of $10^2$ seconds on a GPU. To overcome this difficulty, we compute the Bayes-optimal error through \emph{Gibbs error}, defined as
\begin{align}
   \varepsilon^{\rm Gibbs} = \E_{\mathcal D, \bW^0, \bW^1} \E_{\bx \sim \mathcal N(0, \bI_d)} (y(\bx, \bW^0) - y(\bx, \bW^1))^2,
\end{align}
where $\bW^1$ is a sample of $P(\bW \mid \mathcal D)$. It is known that $\varepsilon^{\rm BO} = \frac{1}{2} \varepsilon^{\rm Gibbs}$. This identity is a straightforward generalization of the identity $\E(X-X')^2 = 2 \E( X - \E X )^2 $ for any i.i.d. random variables $X, X'$. In practice, one only needs $10$-$10^2$ samples of $(\mathcal D, \bW^0, \bW^1)$ due to small fluctuations with respect to these variables, while averaging over $\bx$ requires $10^4$-$10^5$ samples, each of which can be generated at small cost. Computing the optimal error via Gibbs error is much more efficient, as for each fixed $(\mathcal D, \bW^0)$, only one posterior sample from $P(\bW \mid \mathcal D)$ is needed. 

\subsection{Training with SGD} \label{app:sgd}
We use the \textsc{Adam} optimizer implemented in \texttt{Pytorch} to train the student network. We note here some important points in the training 
\begin{enumerate}
\item The student’s features are normalized after each epoch, preventing their norms from growing excessively and resulting in more stable and well-behaved training dynamics.
\item Performance degrades when the student's readout is fixed to match the teacher during training, as the student is more likely to get stuck in local minima. In contrast, when the student readout is learnable, it still recovers that of the teacher, and the possibility for the readout to change during training helps the student escape local minima.
\end{enumerate}

\end{document}